\documentclass[letterpaper, 10 pt, conference]{ieeeconf}

\IEEEoverridecommandlockouts %
\makeatletter
\let\NAT@parse\undefined
\makeatother
\usepackage[numbers]{natbib}

\usepackage{amsmath,amssymb}
\usepackage{bm}
\usepackage{xcolor}
\usepackage{graphicx}
\usepackage{mathtools}
\usepackage{hyperref}
\usepackage{cleveref}
\usepackage{siunitx}
\usepackage{tikz}
\usetikzlibrary{shapes}
\usepackage{multirow}
\usepackage{makecell}
\usepackage{cuted}
\usepackage{textcomp}
\usepackage{stfloats}
\usepackage{url}
\usepackage{verbatim}
\usepackage{gensymb}
\usepackage{subcaption}
\usepackage{tablefootnote}
\usepackage{booktabs}
\usepackage{microtype}
\usepackage{balance}
\let\labelindent\relax
\usepackage{enumitem}

\usepackage{amsthm}

\Crefformat{figure}{#2Fig.~#1#3}
\Crefmultiformat{figure}{Figs.~#2#1#3}{ and~#2#1#3}{, #2#1#3}{ and~#2#1#3}
\Crefformat{equation}{#2Eq.~#1#3}

\definecolor{darkred}{HTML}{9B1B30}
\definecolor{grey}{HTML}{959A9C}
\hypersetup{
    colorlinks=true,
    linkcolor=black,
    citecolor=darkred,
    filecolor=blue,
    urlcolor=grey,
}

\newtheorem{theorem}{Theorem}[section]
\newtheorem{lemma}[theorem]{Lemma}

\newtheorem{remark}[theorem]{Remark}

\DeclareMathOperator{\tr}{tr}

\DeclareMathOperator{\Lip}{Lip}
\DeclareMathOperator{\diam}{diam}
\DeclareMathOperator{\dist}{dist}
\DeclareMathOperator{\reach}{reach}
\DeclareMathOperator{\TV}{TV}
\DeclareMathOperator{\Vol}{Vol}
\DeclareMathOperator{\Id}{Id}
\newcommand{\R}{\mathbb{R}}
\newcommand{\E}{\mathbb{E}}
\newcommand{\Prob}{\mathbb{P}}
\newcommand{\calL}{\mathcal{L}}

\title{\LARGE \bf Distributional Stability of Tangent-Linearized \\Gaussian Inference on Smooth Manifolds}
\author{Junghoon Seo, Hakjin Lee, Jaehoon Sim%
\thanks{The authors are with AI Robot Team, PIT IN Corp., South Korea.
        {\tt\footnotesize \{sjh,hj,simjeh\}@pitin-ev.com}}%
}

\begin{document}
\maketitle

\begin{abstract}
Gaussian inference on smooth manifolds is central to robotics, but exact marginalization and
conditioning are generally non-Gaussian and geometry-dependent.
We study tangent-linearized Gaussian inference and
derive explicit non-asymptotic $W_2$ stability bounds for projection marginalization and
surface-measure conditioning.
The bounds separate local second-order geometric distortion from nonlocal tail leakage and, for
Gaussian inputs, yield closed-form diagnostics from $(\mu,\Sigma)$ and curvature/reach surrogates.
Circle and planar-pushing experiments validate the predicted calibration transition near $\sqrt{\|\Sigma\|_{\mathrm{op}}}/R\approx 1/6$ and indicate that normal-direction uncertainty is the dominant failure mode when locality breaks.
These diagnostics provide practical triggers for switching from single-chart linearization to
multi-chart or sample-based manifold inference.
Code and Jupyter notebooks are available at \url{https://github.com/mikigom/StabilityTLGaussian}.
\end{abstract}

\section{Introduction}\label{sec:intro}
Probabilistic inference on smooth manifolds is fundamental in robotics whenever state spaces or
constraints are intrinsically nonlinear (e.g., pose/Lie groups~\cite{barfoot2024state}, contact
sets~\cite{qadri2022incopt,kaess2011isam2}, and kinematic manifolds~\cite{csucan2012motion}).
In these settings, estimators must return both manifold-valued states and calibrated uncertainty
for planning, control, and sensor fusion~\cite{barfoot2024state}.
Two recurring operations are \emph{marginalization} (eliminating nuisance directions) and
\emph{conditioning} (enforcing constraints).
Although both are classical in Euclidean spaces, their manifold counterparts depend on local
geometry and probability-mass locality and are not captured by linear-subspace theory alone~\cite{eustice2006exactly}.
For positive-codimension manifolds, $\{X\in\mathcal M\}$ has zero ambient measure, so conditioning
must instead be defined by restricting the ambient density to $\mathcal M$ and renormalizing
with respect to surface measure, rather than by event conditioning in $\mathbb R^n$.

Recently, Guo et al. (2025)~\cite{guo2025marginalizing} derived explicit Gaussian identities for \emph{linear} constraint manifolds and advocated a linearize--infer--retract workflow for smooth manifolds: linearize at
$\tilde\mu\in\mathcal M$, apply affine-manifold identities on $T_{\tilde\mu}\mathcal M$, and map
the result back to $\mathcal M$ using a retraction or chart.
In robotics estimation, their framework enables tight, geometry-consistent uncertainty extraction for constrained inference by performing closed-form marginalization/conditioning on tangent-space surrogates and retracting back to the manifold.
This viewpoint connects to constrained covariance extraction in factor-graph systems~\cite{kaess2009covariance},
projection-based directional uncertainty models on circles and spheres~\cite{wang2013directional,HernandezStumpfhauser2017TheGP},
and geometry-aware filtering on Lie groups/manifolds~\cite{bourmaud2015continuous,bonnable2009invariant,van2022equivariant}.
Yet the question remains: \textbf{when is a single tangent linearization reliable at the distribution level?}
This question arises concretely when propagating quaternion covariance on $S^3$, extracting contact-consistent uncertainty, or enforcing SLAM loop closures.
Despite the prevalence of tangent-linearized workflows in robotics estimation, no existing work provides explicit, non-asymptotic distributional error bounds that separate the roles of curvature, covariance scale, and mean offset for these operations.

In this paper, we formalize reliability as a \emph{distributional} stability question: how close are the exact manifold-induced laws to the laws produced by tangent linearization and retraction?
We quantify this discrepancy using the $2$-Wasserstein distance $W_2$, since it metrizes weak convergence with second moments and therefore directly controls mean and covariance errors~\cite{villani2008optimal,peyre2019computational}.
Concretely, for an ambient Gaussian $X\sim \mathcal N(\mu,\Sigma)$ and a $C^2$ embedded submanifold $\mathcal M\subset\mathbb R^n$, we take as \emph{exact} targets the two canonical ways of pushing mass onto $\mathcal M$:
\begin{itemize}[leftmargin=*,nosep]
\item \textbf{Marginalization via projection:}
$P_{\mathrm{marg}} := g_\#\mathcal N(\mu,\Sigma)$ where $g$ is (locally) the Euclidean metric projection onto $\mathcal M$.
\item \textbf{Conditioning via surface measure:}
$P_{\mathrm{cond}}$ with density proportional to the ambient Gaussian density restricted to $\mathcal M$,
i.e., $dP_{\mathrm{cond}} \propto \varphi_{\mu,\Sigma}\, d\mathrm{Vol}_{\mathcal M}$.
\end{itemize}
Given a linearization point $\tilde\mu\in\mathcal M$, its surrogate replaces $\mathcal M$ by its tangent space $T_{\tilde\mu}\mathcal M$, applies the corresponding affine-manifold identities, and then maps the resulting law back to $\mathcal M$ via a retraction or chart~\cite{guo2025marginalizing,absil2008optimization}.
Our contributions are threefold:
\begin{itemize}[leftmargin=*,nosep]
\item We derive explicit non-asymptotic $W_2$ stability bounds for projection marginalization
and surface-measure conditioning, decomposing error into local second-order geometric distortion and nonlocal tail leakage.
\item We specialize these bounds to Gaussian inputs, yielding computable diagnostics from
$(\mu,\Sigma)$ together with local curvature/reach surrogates.
\item We validate the predicted regimes in circle and planar-pushing experiments, including
anisotropy, offset, and directional-noise stress tests.
\end{itemize}

\section{Preliminaries}\label{sec:prelim}
We work in the Euclidean space $(\mathbb R^n,\langle\cdot,\cdot\rangle)$ with norm
$\|x\|:=\sqrt{\langle x,x\rangle}$. For $r>0$ and $x\in\mathbb R^n$, let
$B(x,r):=\{z\in\mathbb R^n:\|z-x\|\le r\}$ and $B(r):=B(0,r)$.
For a matrix $A$, $\|A\|_{\mathrm{op}}$ and $\|A\|_F$ denote the operator and Frobenius norms, respectively.
We write $\mathbf 1_A$ for the indicator of an event $A$.
For a measurable map $h:\mathbb R^n\to\mathbb R^m$ and a probability measure $P$ on $\mathbb R^n$,
the \emph{pushforward} is $h_\#P$, defined by $(h_\#P)(B)=P(h^{-1}(B))$ for Borel $B\subset\mathbb R^m$.
If $X$ has law $\mathcal L(X)=P$, then $\mathcal L(h(X))=h_\#P$.

\paragraph{Wasserstein distance}
Let $\mathcal P_2(\mathbb R^n)$ be the set of Borel probability measures on $\mathbb R^n$ with finite second moment.
For $P,Q\in\mathcal P_2(\mathbb R^n)$, the $2$-Wasserstein distance~\cite{villani2008optimal,peyre2019computational} is
\begin{equation}\label{eq:W2def}
W_2(P,Q)
:=
\inf_{\pi\in\Pi(P,Q)}\left(\int_{\mathbb R^n\times\mathbb R^n}\|x-y\|^2\,d\pi(x,y)\right)^{1/2},
\end{equation}
where $\Pi(P,Q)$ denotes the set of couplings of $(P,Q)$.
For any coupling $(X,Y)$ with $\calL(X)=P$ and $\calL(Y)=Q$,
\begin{equation}\label{eq:W2upper}
W_2(P,Q)\le \big(\mathbb E\|X-Y\|^2\big)^{1/2}.
\end{equation}
If $h:\mathbb R^n\to\mathbb R^m$ is $L$-Lipschitz,
\begin{equation}\label{eq:W2LipPush}
W_2(h_\#P,h_\#Q)\le L\,W_2(P,Q).
\end{equation}
Moreover, $W_2$ controls both location ($\|\mu_P-\mu_Q\|\le W_2(P,Q)$) and, via Cauchy--Schwarz, uncentered second-moment discrepancy ($\|M_2(P)-M_2(Q)\|_F \le \sqrt{2(m_2(P)+m_2(Q))}\,W_2(P,Q)$, where $m_2(\nu):=\E_{Z\sim\nu}\|Z\|^2$)~\cite{panaretos2019statistical}.
We therefore state our stability bounds in $W_2$ as practical calibration criteria for both marginalization and conditioning.

\paragraph{Embedded submanifolds, curvature, and reach}
Let $\mathcal M\subset\mathbb R^n$ be a $C^2$ embedded submanifold of dimension $d$.
For $y\in\mathcal M$, denote by $T_y\mathcal M$ its tangent space and by $N_y\mathcal M=(T_y\mathcal M)^\perp$
its normal space. Let $\Pi_{T_y\mathcal M}$ and $\Pi_{N_y\mathcal M}$ be the corresponding orthogonal projectors.
Write $\mathrm{II}_y:T_y\mathcal M\times T_y\mathcal M\to N_y\mathcal M$ for the second fundamental form.
The \emph{reach} of a closed set $S\subset\mathbb R^n$, $\reach(S)$, is the largest $\rho$ such that every point
within distance $<\rho$ has a unique nearest point in $S$.
If $\reach(\mathcal M)\ge \rho>0$, the metric projection $g$ is well-defined on the tube
$\mathcal T_\rho(\mathcal M):=\{x:\dist(x,\mathcal M)<\rho\}$~\cite{federer1959curvature}.

\paragraph{Retractions and charts}
Fix $\tilde\mu\in\mathcal M$ and write $T:=T_{\tilde\mu}\mathcal M$.
A (local) retraction at $\tilde\mu$ is a $C^2$ map $R_{\tilde\mu}:B_T(r)\to\mathcal M$ satisfying
$R_{\tilde\mu}(0)=\tilde\mu$ and $DR_{\tilde\mu}(0)=\Id_T$.
We use a quadratic accuracy bound~\cite{absil2008optimization}: there exists $\kappa_R\ge 0$ such that
\begin{equation}\label{eq:RetractQuadPrelim}
\|R_{\tilde\mu}(v)-(\tilde\mu+v)\|
\le \frac{\kappa_R}{2}\,\|v\|^2,
\qquad \|v\|\le r.
\end{equation}
More generally, for a chart $\Psi:B_r\subset\mathbb R^d\to\mathcal M$ with $\Psi(0)=\tilde\mu$,
the induced volume satisfies $d\mathrm{Vol}_{\mathcal M}(\Psi(v)) = J(v)\,dv$~\cite{lee2006riemannian}.

\subsection{Marginalization and conditioning onto manifolds}
Let $X\in\R^n$ have law $P$ with density $p$, and let
$\mathcal M=\{x\in\R^n:\ f(x)=0\}$ be a $d$-dimensional smooth manifold defined by a regular map
$f:\R^n\to\R^{n-d}$. Following~\cite{guo2025marginalizing}, there are two canonical ways to push
probability onto $\mathcal M$:
(i)~\emph{marginalization by projection}: choose $g_{\mathcal M}:\R^n\to\mathcal M$ and set
\begin{equation}\label{eq:marg-def}
P_{\mathrm{marg}}:=(g_{\mathcal M})_\# P;
\end{equation}
(ii)~\emph{conditioning}: restrict and renormalize, giving
\begin{equation}\label{eq:cond-def}
dP_{\mathrm{cond}}(x)=Z_{\mathcal M}^{-1}\,p(x)\,d\Vol_{\mathcal M}(x),
\qquad
Z_{\mathcal M}:=\int_{\mathcal M}p(x)\,d\Vol_{\mathcal M}(x),
\end{equation}
assuming $0<Z_{\mathcal M}<\infty$.
For $\mathrm{codim}(\mathcal M)>0$, this is \emph{not} event conditioning on
$\{X\in\mathcal M\}$ (which has measure zero under the ambient law). It is a new probability
measure obtained by restricting the density to $\mathcal M$ and normalizing with respect to
$\Vol_{\mathcal M}$.

For an affine linear manifold $\mathcal M_{\mathrm{lin}} := \{x\in\mathbb R^n:\ S^\top x = c\}$ with full-rank
$S\in\mathbb R^{n\times q}$, $q<n$, let $N$ span $\mathrm{null}(S^\top)$ and define $\Pi:=N(N^\top N)^{-1}N^\top$.
For $X\sim\mathcal N(\mu,\Sigma)$, marginalization and conditioning admit closed forms~\cite{guo2025marginalizing}:
\begin{equation}\label{eq:lin-marg}
\mu_{\mathrm{marg}} = \Pi\mu + x_0,
\qquad
\Sigma_{\mathrm{marg}} = \Pi\Sigma\Pi^\top,
\end{equation}
\begin{equation}\label{eq:lin-cond}
\begin{aligned}
\Sigma_{\mathrm{cond}} &= N\big(N^\top\Sigma^{-1}N\big)^{-1}N^\top,
\\
\mu_{\mathrm{cond}} &= x_0 + \Sigma_{\mathrm{cond}}\Sigma^{-1}(\mu-x_0),
\end{aligned}
\end{equation}
where $x_0:=S(S^\top S)^{-1}c\in\mathcal M_{\mathrm{lin}}$.

\subsection{Linearization principle for smooth manifolds}
The closed-form identities \eqref{eq:lin-marg}--\eqref{eq:lin-cond} hold when the constraint manifold is an affine subspace of~$\R^n$.
To extend them to a smooth nonlinear submanifold $\mathcal M\subset\R^n$, we adopt a standard
\emph{tangent-plane surrogate}: select a linearization point $\tilde\mu\in\mathcal M$ (typically
the Euclidean projection of the ambient mean onto $\mathcal M$ or a current estimate) and
replace $\mathcal M$ locally by its first-order approximation at~$\tilde\mu$~\cite{guo2025marginalizing}.
Specifically, let $\mathcal M=\{x:\ f(x)=0\}$ be a regular level set for a smooth
$f:\R^n\to\R^{n-d}$ with $\mathrm{rank}\,Df(\tilde\mu)=n-d$.
The tangent space at $\tilde\mu$ then admits the affine representation
\begin{equation}\label{eq:tangent-affine}
T_{\tilde\mu}\mathcal M
=\{x\in\R^n:\ S^\top x=c\},
\,
S := Df(\tilde\mu)^\top,
\,
c:=S^\top\tilde\mu,
\end{equation}
where $N:=\mathrm{null}(S^\top)$ spans tangent directions and
$\Pi:=N(N^\top N)^{-1}N^\top$ is the orthogonal projector onto $T_{\tilde\mu}\mathcal M$.
Applying \eqref{eq:lin-marg} or \eqref{eq:lin-cond} in this affine setting yields a
\emph{degenerate Gaussian} on $T_{\tilde\mu}\mathcal M$, constituting a first-order approximation
to the on-manifold distribution.

Because tangent-space distributions do not lie on $\mathcal M$ itself, the resulting Gaussian is
mapped back to $\mathcal M$ via a retraction $R_{\tilde\mu}$ or an explicit chart~$\Psi$,
producing the pushforward $\widehat P:= (R_{\tilde\mu})_\# P_T$ (or $\widehat P:=\Psi_\# Q_T$ in
local coordinates), where $P_T$ is the Gaussian on $T_{\tilde\mu}\mathcal M$.
In what follows, we adopt this tangent--retraction construction as the baseline and derive
distribution-level error bounds that quantify when a single linearization suffices.
\section{\texorpdfstring{$W_2$}{W2} stability for Gaussian marginalization}\label{sec:marg}
\begin{figure}[t]
    \centering
    \includegraphics[width=\linewidth]{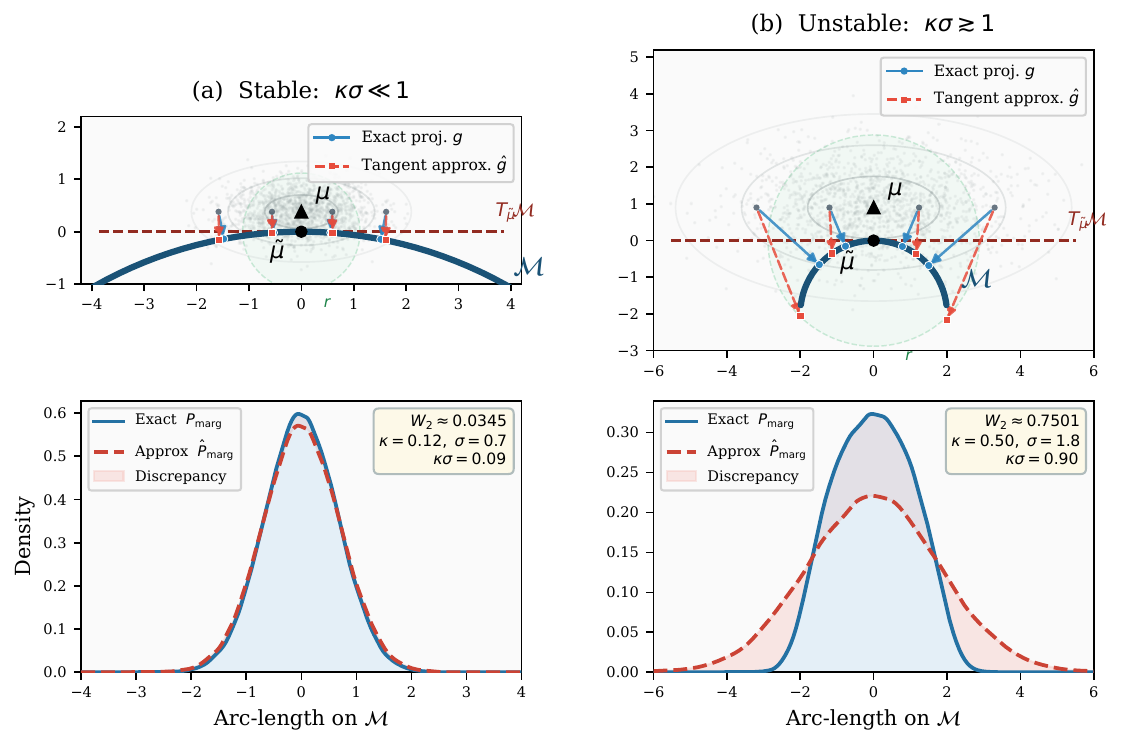}
    \caption{%
    \textbf{$W_2$ stability of marginalizing a Gaussian onto a circular arc.}
    Exact metric projection $g$ vs.\ the tangent--retraction approximation $\hat g$.}
    \label{fig:teaser-marg}
\end{figure}

Let $\mathcal M\subset\mathbb R^n$ be a $C^2$ embedded submanifold and let
$X\sim\mathcal N(\mu,\Sigma)$ with $\Sigma\succ0$.
With $g:\mathbb R^n\to\mathcal M$ denoting a measurable nearest-point projection map (locally, the unique metric projection),
let $Y:=g(X)$ and $P_{\mathrm{marg}}:=g_\#\mathcal N(\mu,\Sigma)$ as in~\eqref{eq:marg-def}.
This section quantifies, in $W_2$, the discrepancy between $P_{\mathrm{marg}}$
and a tangent--retraction surrogate.
Given a linearization point $\tilde\mu\in\mathcal M$, write $T:=T_{\tilde\mu}\mathcal M$ with orthogonal projector $\Pi_T$ and define
\begin{equation}\label{eq:surrogate-marg}
\begin{aligned}
\hat g(x) &:= \bar{R}\big(\Pi_T(x-\tilde\mu)\big),\\
\widehat Y &:= \hat g(X),\\
\widehat P_{\mathrm{marg}} &:= \mathcal L(\widehat Y).
\end{aligned}
\end{equation}
where $\bar R:T\to\mathcal M$ is a measurable extension of a local retraction
$R_{\tilde\mu}:B_T(r)\to\mathcal M$ outside $B_T(r)$, chosen so that the fourth moment in $C_{\mathrm{tail}}$ below is finite (for instance, by using a constant extension outside $B_T(r)$).
The retraction satisfies the quadratic accuracy bound \eqref{eq:RetractQuadPrelim}.
The main estimate below separates a local second-order geometric error from a tail contribution
outside the localization radius $r$.

\paragraph{Geometric locality}
Assume $\reach(\mathcal M)\ge\rho>0$ and fix $r\in(0,\rho/2)$.
Under this assumption, $g$ coincides with the unique metric projection on the tube $\mathcal T_\rho(\mathcal M)$.
Assume a local curvature bound on $\mathcal M\cap B(\tilde\mu,2r)$:
\begin{equation}\label{eq:lem-curv}
\sup_{y\in\mathcal M\cap B(\tilde\mu,2r)}\|\mathrm{II}_y\|_{\mathrm{op}}\le \kappa.
\end{equation}

\begin{theorem}[W$_2$ stability of marginalization under tangent--retraction]\label{thm:W2-marg}
There exists a constant $C_{\mathrm{loc}}>0$ (depending only on dimensionless tube/curvature margins,
e.g.\ $r/\rho$ and $r\kappa$, and on the ambient dimension) such that, with
$A:=\{\|X-\tilde\mu\|\le r\}$ and $\varepsilon:=\Prob(A^c)$,
\begin{equation}\label{eq:W2MargMain}
\begin{aligned}
W_2\!\big(P_{\mathrm{marg}},\widehat P_{\mathrm{marg}}\big)
&\le
C_{\mathrm{loc}}(\kappa+\kappa_R)\Big(\E\big[\|X-\tilde\mu\|^4\mathbf 1_A\big]\Big)^{1/2}
\\
&\quad+\ C_{\mathrm{tail}}\;\varepsilon^{1/4},
\end{aligned}
\end{equation}
where
\[
C_{\mathrm{tail}}
:=
\big(\E\|Y-\tilde\mu\|^4\big)^{1/4}
+
\big(\E\|\widehat Y-\tilde\mu\|^4\big)^{1/4}
<\infty.
\]
\end{theorem}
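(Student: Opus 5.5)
We use the synchronous coupling: both $Y=g(X)$ and $\widehat Y=\hat g(X)$ are driven by the same $X$. By \eqref{eq:W2upper},
\[
W_2(P_{\mathrm{marg}},\widehat P_{\mathrm{marg}})\le \big(\E\|g(X)-\hat g(X)\|^2\big)^{1/2}.
\]
We split the expectation on the event $A$ and its complement $A^c$. By Minkowski,
\[
\big(\E\|Y-\widehat Y\|^2\big)^{1/2}\le \big(\E\|Y-\widehat Y\|^2\mathbf 1_A\big)^{1/2}+\big(\E\|Y-\widehat Y\|^2\mathbf 1_{A^c}\big)^{1/2}.
\]
The first term will give the local geometric term and the second the tail term.

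\textbf{Tail term.} Write $\|Y-\widehat Y\|\le\|Y-\tilde\mu\|+\|\widehat Y-\tilde\mu\|$ and apply Minkowski again. Cauchy--Schwarz then gives
\[
\big(\E\|Y-\tilde\mu\|^2\mathbf 1_{A^c}\big)^{1/2}\le(\E\|Y-\tilde\mu\|^4)^{1/4}\,\varepsilon^{1/4},
\]
and likewise for $\widehat Y$. Summing yields exactly $C_{\mathrm{tail}}\varepsilon^{1/4}$.

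Finiteness of $C_{\mathrm{tail}}$ comes from two facts. First, $\|g(x)-\tilde\mu\|\le 2\|x-\tilde\mu\|$, because $g(x)$ is a nearest point, so $\|g(x)-x\|\le\|x-\tilde\mu\|$; Gaussian moments are then finite. Second, the hypothesis on the extension $\bar R$ handles $\widehat Y$: with the constant extension $\widehat Y$ is bounded, since $R_{\tilde\mu}$ is continuous on the compact ball $B_T(r)$.

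\textbf{Local term.} On $A$, set $x$ with $\|x-\tilde\mu\|\le r<\rho/2$ and $v:=\Pi_T(x-\tilde\mu)$. Then $\|v\|\le r$, so $\hat g(x)=R_{\tilde\mu}(v)$. Also $x$ lies in the tube, so $g(x)$ is the unique metric projection, and $\|g(x)-\tilde\mu\|\le 2\|x-\tilde\mu\|\le 2r$, so $g(x)\in B(\tilde\mu,2r)$ where the curvature bound \eqref{eq:lem-curv} holds. Insert $\tilde\mu+v$ and use the triangle inequality:
\[
\|g(x)-\hat g(x)\|\le \|g(x)-(\tilde\mu+v)\|+\|R_{\tilde\mu}(v)-(\tilde\mu+v)\|.
\]
By \eqref{eq:RetractQuadPrelim}, the second piece is at most $\tfrac{\kappa_R}{2}\|v\|^2\le\tfrac{\kappa_R}{2}\|x-\tilde\mu\|^2$.

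The key geometric lemma, and the main obstacle, is the bound
\[
\|g(x)-\tilde\mu-\Pi_T(x-\tilde\mu)\|\le C\kappa\|x-\tilde\mu\|^2 .
\]
We plan to prove it as follows.
\begin{enumerate}[leftmargin=*,nosep]
\item Represent $\mathcal M\cap B(\tilde\mu,2r)$ as a graph over $T$, $u\mapsto\tilde\mu+u+h(u)$ with $h(0)=0$ and $Dh(0)=0$. The curvature bound gives $\|h(u)\|\le C\kappa\|u\|^2$ and $\|Dh(u)\|\le C\kappa\|u\|$. Here $r\kappa$ must be small enough for the graph representation; this is why $C_{\mathrm{loc}}$ depends on $r\kappa$, and reach $\ge\rho$ forces $\kappa\le 1/\rho$.
\item Write $g(x)=\tilde\mu+u^*+h(u^*)$. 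The first-order condition says $x-g(x)$ is normal at $g(x)$, i.e.\ $\langle x-g(x),\,w+Dh(u^*)w\rangle=0$ for all $w\in T$.
\item Taking $w=u^*-v$ gives
\[
\|u^*-v\|\le\|Dh(u^*)\|\,\|x-g(x)\|+\|h(u^*)\|\,\|Dh(u^*)\|+\dots,
\]
and every term on the right is $O(\kappa\|x-\tilde\mu\|^2)$, using $\|u^*\|\le 2\|x-\tilde\mu\|$ and $\|x-g(x)\|\le\|x-\tilde\mu\|$.
\item Hence $\|g(x)-\tilde\mu-v\|\le\|u^*-v\|+\|h(u^*)\|=O(\kappa\|x-\tilde\mu\|^2)$.
\end{enumerate}
Alternatively, Federer's Lipschitz bound $\Lip(g)\le\rho/(\rho-s)$ on the tube of radius $s$, together with a Taylor expansion of $g$ at $\tilde\mu$ (where $Dg(\tilde\mu)=\Pi_T$), could replace the graph argument. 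The constants depend only on $r/\rho$, $r\kappa$ and $n$.

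Combining the two pieces, on $A$ we have $\|g(x)-\hat g(x)\|\le C_{\mathrm{loc}}(\kappa+\kappa_R)\|x-\tilde\mu\|^2$. Squaring and taking expectations gives
\[
\big(\E\|Y-\widehat Y\|^2\mathbf 1_A\big)^{1/2}\le C_{\mathrm{loc}}(\kappa+\kappa_R)\big(\E\|X-\tilde\mu\|^4\mathbf 1_A\big)^{1/2},
\]
which together with the tail term proves \eqref{eq:W2MargMain}.
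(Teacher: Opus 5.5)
Your proposal is correct and follows essentially the same route as the paper: coupling $Y$ and $\widehat Y$ through the shared input $X$, splitting over $A$ and $A^c$, proving a local quadratic comparison in which your graph-over-$T$ bounds on $h$ and $Dh$ play the role of the paper's height and angle estimates \eqref{eq:height-est}--\eqref{eq:angle-est} before the triangle inequality with the retraction bound, and handling the tail with Cauchy--Schwarz plus Minkowski. Your nearest-point bound $\|g(x)-\tilde\mu\|\le\|g(x)-x\|+\|x-\tilde\mu\|\le 2\|x-\tilde\mu\|$ is a slightly cleaner replacement for the paper's tube-Lipschitz argument, and because it holds globally it also settles finiteness of the $Y$-part of $C_{\mathrm{tail}}$.
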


\begin{remark}[Practical calibration of $C_{\mathrm{loc}}$]
$C_{\mathrm{loc}}$ depends only on local geometry, not on $(\mu,\Sigma)$.
It can be estimated by sampling $x_j\in B(\tilde\mu,r)$, computing the ratio $\|g(x_j)-R_{\tilde\mu}(\Pi_T(x_j-\tilde\mu))\|/\big((\kappa+\kappa_R)\|x_j-\tilde\mu\|^2\big)$, and taking a high quantile with a safety factor.
\end{remark}

\begin{proof}[Proof of Theorem~\ref{thm:W2-marg}]
We couple $Y=g(X)$ and $\widehat Y=\hat g(X)$ through the same Gaussian input $X$, and we estimate
$\E\|Y-\widehat Y\|^2$ by separating a local region (where geometry is second-order accurate) from its complement.
The deterministic part is a local comparison between metric projection and the tangent--retraction map.

\begin{lemma}[Local quadratic comparison of projection and tangent--retraction]\label{lem:local-proj-retr}
Under the geometric locality assumptions (Equation \eqref{eq:lem-curv}),
let $R_{\tilde\mu}:B_T(r)\to\mathcal M$ be a $C^2$ retraction with \eqref{eq:RetractQuadPrelim}.
Then there exists $C_{\mathrm{loc}}>0$ such that for every $x$ with $\|x-\tilde\mu\|\le r$,
\begin{equation}\label{eq:lem-main}
\big\|g(x)-R_{\tilde\mu}\big(\Pi_T(x-\tilde\mu)\big)\big\|
\ \le\
C_{\mathrm{loc}}\,(\kappa+\kappa_R)\,\|x-\tilde\mu\|^2.
\end{equation}
\end{lemma}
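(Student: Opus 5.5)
We insert the point $\tilde\mu+v$ with $v:=\Pi_T(x-\tilde\mu)$ and use the triangle inequality
\[
\|g(x)-R_{\tilde\mu}(v)\|\le \|g(x)-(\tilde\mu+v)\|+\|(\tilde\mu+v)-R_{\tilde\mu}(v)\|.
\]
Since $\Pi_T$ is an orthogonal projector, $\|v\|\le\|x-\tilde\mu\|\le r$, so $v$ lies in the domain $B_T(r)$ of the retraction. The retraction accuracy bound \eqref{eq:RetractQuadPrelim} then controls the second term by $\tfrac{\kappa_R}{2}\|x-\tilde\mu\|^2$. The work is therefore in the first term: we must show that the tangential component of $g(x)-\tilde\mu$ differs from $v$, and its normal component is small, both at second order with constant proportional to $\kappa$.

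For the first term, set $y:=g(x)$. Because $\tilde\mu\in\mathcal M$, we have $\|x-y\|=\dist(x,\mathcal M)\le\|x-\tilde\mu\|\le r<\rho$, so $x$ lies in the tube $\mathcal T_\rho(\mathcal M)$ and $y$ is the unique metric projection. Also $\|y-\tilde\mu\|\le 2r$, so $y\in\mathcal M\cap B(\tilde\mu,2r)$, where the curvature bound \eqref{eq:lem-curv} applies.

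We split $y-(\tilde\mu+v)$ into its $N_{\tilde\mu}\mathcal M$ and $T$ components.

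\emph{Normal part.} We use the reach/curvature estimate $\|\Pi_{N_{\tilde\mu}\mathcal M}(y-\tilde\mu)\|\le \tfrac{c\kappa}{2}\|y-\tilde\mu\|^2$ for $y\in\mathcal M\cap B(\tilde\mu,2r)$. To obtain it, write $\mathcal M$ locally as a graph over $T$ and integrate the second fundamental form along a curve in $\mathcal M$ from $\tilde\mu$ to $y$, with length comparable to $\|y-\tilde\mu\|$. Federer's bound $\|\Pi_N(y-\tilde\mu)\|\le \|y-\tilde\mu\|^2/(2\rho)$ gives the same form of estimate, but with $1/\rho$ in place of $\kappa$. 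Since the lemma's constant is proportional to $\kappa$, the graph-based route is the one to use. We also need $\|y-\tilde\mu\|\le C\|x-\tilde\mu\|$, which is immediate from the previous paragraph with $C=2$.

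\emph{Tangential part.} The first-order optimality condition for the projection gives $x-y\in N_y\mathcal M$, so
\[
\Pi_T(x-\tilde\mu)-\Pi_T(y-\tilde\mu)=\Pi_T(x-y)=\Pi_T\Pi_{N_y\mathcal M}(x-y).
\]
We bound this by $\|\Pi_T-\Pi_{T_y\mathcal M}\|_{\mathrm{op}}\,\|x-y\|$. The tangent spaces turn at rate $\|\mathrm{II}\|$ along $\mathcal M$, so $\|\Pi_T-\Pi_{T_y\mathcal M}\|_{\mathrm{op}}\le c\,\kappa\|y-\tilde\mu\|$. Combined with $\|x-y\|\le\|x-\tilde\mu\|$, this gives a bound $c'\kappa\|x-\tilde\mu\|^2$.

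Adding the normal and tangential contributions yields $\|y-(\tilde\mu+v)\|\le C\kappa\|x-\tilde\mu\|^2$. Together with the retraction term, this gives \eqref{eq:lem-main} with $C_{\mathrm{loc}}$ depending only on $r\kappa$, $r/\rho$ and $n$.

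\emph{Main obstacle.} The hard step is making the two geometric estimates rigorous under only $C^2$ regularity with constants explicitly tied to $\kappa$. These are the normal-offset bound and the tangent-space rotation bound $\|\Pi_{T_{\tilde\mu}\mathcal M}-\Pi_{T_y\mathcal M}\|_{\mathrm{op}}\lesssim\kappa\|y-\tilde\mu\|$. Both require joining $\tilde\mu$ to $y$ by a curve in $\mathcal M\cap B(\tilde\mu,2r)$ whose length is comparable to $\|y-\tilde\mu\|$. I would first try to obtain this from the graph representation over $T$, valid on a ball of radius $\sim\min(\rho,1/\kappa)$ thanks to $r<\rho/2$. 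If the graph radius falls short of $2r$, the extra dependence on $r\kappa$ and $r/\rho$ is absorbed into $C_{\mathrm{loc}}$, which the statement permits.
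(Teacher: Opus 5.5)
Your proposal is correct and follows essentially the same route as the paper. You insert $\tilde\mu+v$ and bound the normal part of $g(x)-\tilde\mu$ by the height estimate. You bound the tangential part using $x-g(x)\in N_{g(x)}\mathcal M$ together with the tangent-space angle estimate, and you finish with the retraction bound \eqref{eq:RetractQuadPrelim}. The differences are minor: you get $\|g(x)-\tilde\mu\|\le 2\|x-\tilde\mu\|$ from $\dist(x,\mathcal M)\le\|x-\tilde\mu\|$ and the triangle inequality instead of the Lipschitz bound $\rho/(\rho-r)$ on the projection, and you sketch the height and angle estimates that the paper simply cites as standard graph estimates.
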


\begin{proof}
Fix $x$ with $d:=\|x-\tilde\mu\|\le r$, $y:=g(x)\in\mathcal M$, and $v:=\Pi_T(x-\tilde\mu)\in T$.
Since $r<\rho$, the metric projection satisfies $\Lip(g|_{\mathcal T_r})\le \rho/(\rho-r)\le 2$, giving $\|y-\tilde\mu\|\le 2d$.
Standard $C^2$ graph estimates~\cite{boissonnat2021triangulating,niyogi2008finding} yield, for $z\in\mathcal M\cap B(\tilde\mu,2r)$,
\begin{align}
\|\Pi_{T^\perp}(z-\tilde\mu)\|
&\le \tfrac{C_{\mathrm{ht}}\kappa}{2}\,\|z-\tilde\mu\|^2,
\label{eq:height-est}\\
\|\Pi_{T_z\mathcal M}-\Pi_T\|_{\mathrm{op}}
&\le C_{\mathrm{ang}}\kappa\,\|z-\tilde\mu\|.
\label{eq:angle-est}
\end{align}
Since $\Pi_{T_y\mathcal M}(x-y)=0$, applying~\eqref{eq:angle-est} at $z=y$ gives $\|v-\Pi_T(y-\tilde\mu)\|\le 3C_{\mathrm{ang}}\kappa\, d^2$, and~\eqref{eq:height-est} gives $\|\Pi_{T^\perp}(y-\tilde\mu)\|\le 2C_{\mathrm{ht}}\kappa\, d^2$.
Hence $\|y-(\tilde\mu+v)\|\le C_{\mathrm{proj}}\kappa\, d^2$ with $C_{\mathrm{proj}}:=2C_{\mathrm{ht}}+3C_{\mathrm{ang}}$, and the retraction bound~\eqref{eq:RetractQuadPrelim} completes the triangle inequality to~\eqref{eq:lem-main}. \qedhere
\end{proof}

Returning to the random setting, the shared-input coupling and~\eqref{eq:W2upper} give
\begin{equation}\label{eq:W2-couple}
W_2\big(\calL(Y),\calL(\widehat Y)\big)
\ \le\ \Big(\E\|Y-\widehat Y\|^2\Big)^{1/2}.
\end{equation}
Let $A:=\{\|X-\tilde\mu\|\le r\}$. Splitting the second moment over $A$ and $A^c$,
\begin{equation}\label{eq:split}
\E\|Y-\widehat Y\|^2
:=
\E\big[\|Y-\widehat Y\|^2\mathbf 1_A\big]
+
\E\big[\|Y-\widehat Y\|^2\mathbf 1_{A^c}\big].
\end{equation}
On $A$, Lemma~\ref{lem:local-proj-retr} yields
\[
\|Y-\widehat Y\|
\le C_{\mathrm{loc}}(\kappa+\kappa_R)\|X-\tilde\mu\|^2,
\]
therefore
\begin{equation}\label{eq:local-term}
\begin{aligned}
\Big(\E\big[\|Y-\widehat Y\|^2\mathbf 1_A\big]\Big)^{1/2}
&\le C_{\mathrm{loc}}(\kappa+\kappa_R)\\
&\quad\times \Big(\E\big[\|X-\tilde\mu\|^4\mathbf 1_A\big]\Big)^{1/2}.
\end{aligned}
\end{equation}
On $A^c$, setting $Z:=\|Y-\tilde\mu\|+\|\widehat Y-\tilde\mu\|$ and applying H\"older ($L^4/L^{4/3}$) followed by Minkowski gives $(\E[\|Y-\widehat Y\|^2\mathbf 1_{A^c}])^{1/2}\le C_{\mathrm{tail}}\,\varepsilon^{1/4}$.
Combining with \eqref{eq:W2-couple}, \eqref{eq:split}, and \eqref{eq:local-term}
yields \eqref{eq:W2MargMain}. \qedhere
\end{proof}

\paragraph{Gaussian specialization}
Set $\delta:=\mu-\tilde\mu$ and write $X-\tilde\mu=\delta+\xi$ with $\xi\sim\mathcal N(0,\Sigma)$.
Then the \emph{untruncated} fourth moment in \eqref{eq:W2MargMain} has the closed form
\begin{equation}
\mathbb E\|X-\tilde\mu\|^4
=\|\delta\|^4
+2\|\delta\|^2\,\mathrm{tr}\,\Sigma
+4\,\delta^\top\Sigma\,\delta
+(\mathrm{tr}\,\Sigma)^2
+2\|\Sigma\|_F^2.
\label{eq:GaussianFourthMoment}
\end{equation}
On $A:=\{\|X-\tilde\mu\|\le r\}$,
the localized fourth moment satisfies
$\mathbb E[\|X-\tilde\mu\|^4\,\mathbf 1_A]\le r^4$ and can be sharpened via the CDF of $\|X-\tilde\mu\|^2$.

For the tail probability $\varepsilon:=\mathbb P(A^c)$, let $\lambda_{\max}:=\|\Sigma\|_{\mathrm{op}}$
and $Z\sim\mathcal N(0,I_n)$. Using $\mathbb P(\|Z\|\ge \sqrt n+t)\le e^{-t^2/2}$~\cite{vershynin2009high} yields, for any
$r>\|\delta\|$,
\begin{equation}
\varepsilon
=\mathbb P(\|X-\tilde\mu\|>r)
\le \exp\!\left(
-\frac12\Big(\frac{r-\|\delta\|}{\sqrt{\lambda_{\max}}}-\sqrt n\Big)_+^2
\right).
\label{eq:GaussianTail}
\end{equation}
Choosing $r=\|\delta\|+\sqrt{\lambda_{\max}}(\sqrt n+t)$ gives $\varepsilon\le e^{-t^2/2}$.
The constant $C_{\mathrm{tail}}$ can be made fully explicit for a nearest-point $g$ and a bounded or at-most-quadratic extension $\bar R$: Lipschitz projection inside the tube ($\mathrm{Lip}(g)\le\rho/(\rho-r)$), nearest-point control outside it, and retraction accuracy~\eqref{eq:RetractQuadPrelim} bound the relevant fourth-moment terms by Gaussian moments of $\|X-\tilde\mu\|$, yielding a computable criterion from $(\mu,\Sigma)$ and $(\rho,\kappa,\kappa_R,r)$.

\paragraph{Interpretation and numerical illustration}
Equation~\eqref{eq:W2MargMain} decomposes the error into a local geometric term and a tail-leakage term.
Inside $A=\{\|X-\tilde\mu\|\le r\}$, the mismatch between $g$ and $\hat g$ is second order,
scaling as $(\kappa+\kappa_R)\|X-\tilde\mu\|^2$ (Lemma~\ref{lem:local-proj-retr}).
For Gaussian inputs, the effective spread scale is $\|\delta\|+\sqrt{\lambda_{\max}}\sqrt n$, so the local term is
roughly $(\kappa+\kappa_R)(\|\delta\|+\sqrt{\lambda_{\max}}\sqrt n)^2$.
The tail term is controlled by $\varepsilon=\Prob(\|X-\tilde\mu\|>r)$ and decays exponentially once
$r\gtrsim \|\delta\|+\sqrt{\lambda_{\max}}(\sqrt n+t)$, but can dominate when reach forces small $r$.
Hence, $r$ must balance probability-mass coverage against local geometric validity.
Figure~\ref{fig:teaser-marg} supports this prediction: in local regimes, the exact and
approximate laws nearly coincide. Beyond the transition, linearization becomes miscalibrated and
typically overconfident.
The spectral scale $\|\Sigma\|_{\mathrm{op}}$ is a worst-case proxy for how much probability mass can leave the local tube.
In the projection geometries tested below, uncertainty with a large normal component is the dominant failure mode because it stresses curvature and reach most directly.
The height and angle estimates~\eqref{eq:height-est}--\eqref{eq:angle-est} still control general local displacements, so tangent-dominant uncertainty is covered by the theorem but can make the spectral bound conservative (confirmed in Section~\ref{sec:numerics}).

\section{\texorpdfstring{$W_2$}{W2} stability for Gaussian conditioning}\label{sec:cond}
\begin{figure}[t]
    \centering
    \includegraphics[width=\linewidth]{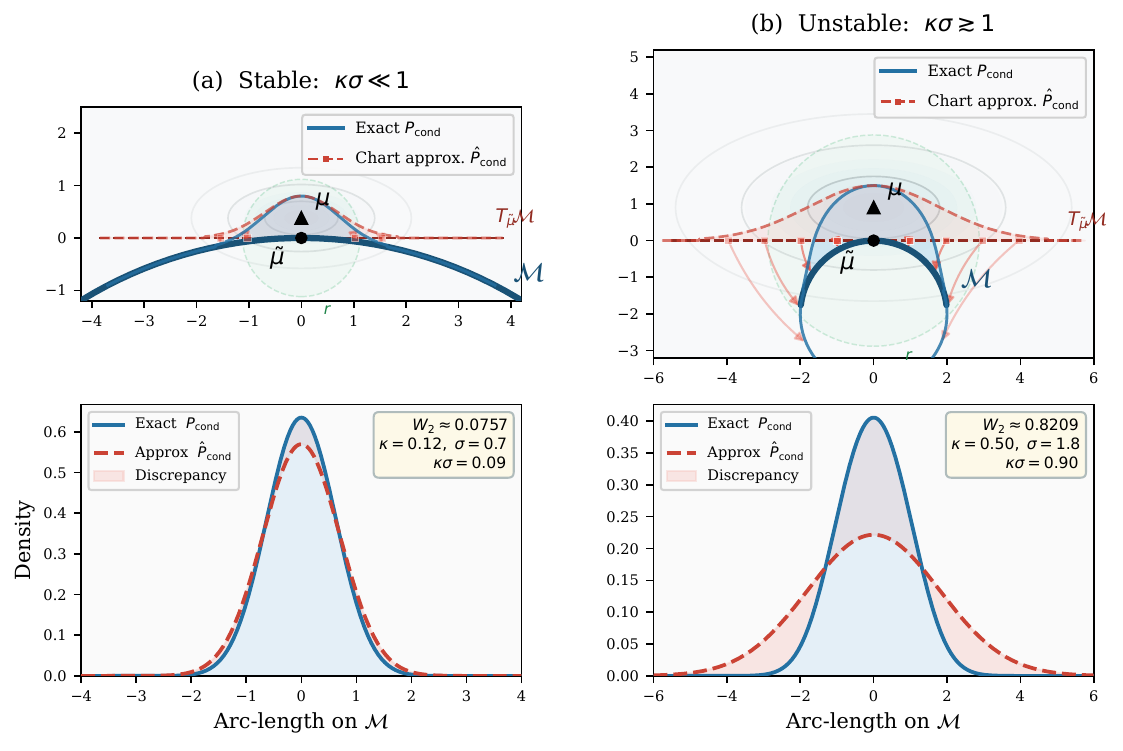}
    \caption{%
    \textbf{$W_2$ stability of conditioning a Gaussian onto a circular arc.}
    Surface-measure conditioning vs.\ a tangent-plane chart approximation.}
    \label{fig:teaser-cond}
\end{figure}

Unlike marginalization, conditioning depends on density shape on $\mathcal{M}$, not just mass transport. We therefore replace the direct coupling of Section~\ref{sec:marg} with likelihood-ratio and total-variation arguments.
Let $\mathcal M\subset\mathbb R^n$ be a $C^2$ embedded submanifold of dimension $d$ and
$X\sim\mathcal N(\mu,\Sigma)$ with density $\varphi_{\mu,\Sigma}$.
Assume the surface-measure restriction has $0<Z_{\mathcal M}<\infty$ and finite second moment.
This section quantifies, in $W_2$, the discrepancy between the exact conditioned law
$P_{\mathrm{cond}}$ from~\eqref{eq:cond-def} (with $\varphi=\varphi_{\mu,\Sigma}$)
and an approximation that first conditions on $T_{\tilde\mu}\mathcal M$
and then maps back to $\mathcal M$ through a local chart.

\paragraph{Tangent-plane reference law}
Fix $\tilde\mu\in\mathcal M$ and let $T:=T_{\tilde\mu}\mathcal M$ as in Section~\ref{sec:marg}.
Let $N\in\mathbb R^{n\times d}$ have orthonormal columns spanning $T$ and define the affine parametrization
$x_{\mathrm{lin}}(v)=\tilde\mu+Nv$ for $v\in\mathbb R^d$.
Let $\Omega:=\Sigma^{-1}$ denote the precision matrix, and recall $\delta=\mu-\tilde\mu$ from Section~\ref{sec:marg}.
Define the probability measure $Q_T$ on $\mathbb R^d$ by
\begin{equation}\label{eq:QTDef}
\begin{aligned}
dQ_T(v)
&:=
\frac{\varphi_{\mu,\Sigma}(\tilde\mu+Nv)}{Z_T}\,dv,
\\
Z_T
&:=
\int_{\mathbb R^d}\varphi_{\mu,\Sigma}(\tilde\mu+Nv)\,dv\in(0,\infty).
\end{aligned}
\end{equation}
Equivalently, $Q_T=\mathcal N(m_T,\Sigma_T)$ with
\begin{equation}\label{eq:QTParams}
\Sigma_T=(N^\top \Omega N)^{-1},
\qquad
m_T=\Sigma_T N^\top \Omega\delta.
\end{equation}

\paragraph{Chart model and approximation}
Fix $r>0$ and write $B_r:=\{v\in\mathbb R^d:\ |v|\le r\}$.
Assume $\Psi:B_r\to\mathcal M$ is a $C^2$ embedding onto $\mathcal M_r:=\Psi(B_r)$ with
$\Psi(0)=\tilde\mu$, $D\Psi(0)=N$, and $\Lip(\Psi)\le L_\Psi$ on $B_r$.
Assume the second-order residual bound
\begin{equation}\label{eq:ChartResidual}
\big\|\Psi(v)-(\tilde\mu+Nv)\big\|\le \frac{\kappa_\Psi}{2}\,|v|^2,\qquad v\in B_r,
\end{equation}
\noindent and a quadratic control on the Jacobian factor $J(v)$ defined by
$d\mathrm{Vol}_{\mathcal M}(\Psi(v))=J(v)\,dv$:
\begin{equation}\label{eq:JacobianResidual}
|\log J(v)|\le \frac{\kappa_J}{2}\,|v|^2,\qquad v\in B_r,
\end{equation}
Let $\bar\Psi:\mathbb R^d\to\mathcal M$ be a measurable extension of $\Psi$ such that
$\bar\Psi(v)\in\mathcal M_r$ iff $v\in B_r$ and $\tau_{\widehat P}(r)<\infty$ (e.g., when $\mathcal M\setminus\mathcal M_r\neq\emptyset$, map $v\notin B_r$ to any fixed point in $\mathcal M\setminus\mathcal M_r$),
and define the approximate conditioned law
\begin{equation}\label{eq:PcondHatDef}
\widehat P_{\mathrm{cond}}:=\bar\Psi_\# Q_T.
\end{equation}

\paragraph{Tail quantities}
Define
\begin{equation}\label{eq:tail-probs}
\varepsilon_P:=P_{\mathrm{cond}}(\mathcal M\setminus\mathcal M_r),
\qquad
\varepsilon_Q:=Q_T(\mathbb R^d\setminus B_r),
\end{equation}
and the corresponding tail second moments (w.r.t.\ $\tilde\mu$)
\begin{equation}\label{eq:tail-moments}
\begin{aligned}
\tau_P(r)
&:=\Big(\E\big[\|Y-\tilde\mu\|^2\,\mathbf 1_{\{Y\notin\mathcal M_r\}}\big]\Big)^{1/2},
\quad (Y\sim P_{\mathrm{cond}}),
\\
\tau_{\widehat P}(r)
&:=\Big(\E\big[\|\widehat Y-\tilde\mu\|^2\,\mathbf 1_{\{\widehat Y\notin\mathcal M_r\}}\big]\Big)^{1/2},
\quad (\widehat Y\sim \widehat P_{\mathrm{cond}}).
\end{aligned}
\end{equation}

Define the curvature/volume mismatch exponent
\begin{equation}\label{eq:etarDef}
\eta_r
:=
\frac{\|\Omega\|\kappa_\Psi}{2}\,(\|\delta\|+r)\,r^2
+
\frac{\|\Omega\|\kappa_\Psi^2}{8}\,r^4
+
\frac{\kappa_J}{2}\,r^2.
\end{equation}

\begin{theorem}[W$_2$ stability of surface-measure conditioning under chart linearization]\label{thm:W2-cond}
Under the assumptions above,
\begin{equation}\label{eq:W2CondMain}
\begin{aligned}
W_2\big(P_{\mathrm{cond}},\widehat P_{\mathrm{cond}}\big)
&\le
\tau_P(r)+\tau_{\widehat P}(r)
\ +\ L_\Psi r\big(\sqrt{\varepsilon_P}+\sqrt{\varepsilon_Q}\big)
\\
&\quad+\ 2L_\Psi r\sqrt{\tanh(\eta_r)}.
\end{aligned}
\end{equation}
\end{theorem}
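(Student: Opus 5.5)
The plan is to pull everything back to the chart domain $B_r$, compare the two laws there through a likelihood ratio, and then control the remaining tails separately. Write $P_{\mathrm{cond}}$ in chart coordinates: on $\mathcal M_r$, $P_{\mathrm{cond}}$ is the pushforward under $\Psi$ of the measure on $B_r$ with density proportional to $\varphi_{\mu,\Sigma}(\Psi(v))J(v)$. Likewise $\widehat P_{\mathrm{cond}}$ restricted to $\mathcal M_r$ is $\Psi_\#(Q_T|_{B_r})$, with density proportional to $\varphi_{\mu,\Sigma}(\tilde\mu+Nv)$.

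Next I would introduce the localized probability measures $P_r:=P_{\mathrm{cond}}(\cdot\mid \mathcal M_r)$ and $\widehat P_r:=\widehat P_{\mathrm{cond}}(\cdot\mid\mathcal M_r)$, and apply the triangle inequality
\[
W_2(P_{\mathrm{cond}},\widehat P_{\mathrm{cond}})\le W_2(P_{\mathrm{cond}},P_r)+W_2(P_r,\widehat P_r)+W_2(\widehat P_r,\widehat P_{\mathrm{cond}}).
\]
For the outer terms, couple by keeping the mass already in $\mathcal M_r$ fixed (mixture coupling). The mass $\varepsilon_P$ lying outside must be transported into $\mathcal M_r$. Bounding $\|y-z\|\le\|y-\tilde\mu\|+\|z-\tilde\mu\|$, with $\|z-\tilde\mu\|\le L_\Psi r$ for $z\in\mathcal M_r$, and using Minkowski, gives $W_2(P_{\mathrm{cond}},P_r)\le \tau_P(r)+L_\Psi r\sqrt{\varepsilon_P}$. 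The same argument gives the $\widehat P$ term, with $\varepsilon_Q$ appearing because $\bar\Psi(v)\in\mathcal M_r$ iff $v\in B_r$.

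For the middle term, use that $\mathcal M_r=\Psi(B_r)$ has diameter at most $2L_\Psi r$. For measures on a set of diameter $D$ one has $W_2^2\le D^2\,\TV$, so $W_2(P_r,\widehat P_r)\le 2L_\Psi r\sqrt{\TV}$. It then remains to show $\TV(P_r,\widehat P_r)\le\tanh(\eta_r)$, and this is where the specific form of $\eta_r$ enters. Write the log of the density ratio on $B_r$ as
\[
h(v)=\log\varphi(\Psi(v))-\log\varphi(\tilde\mu+Nv)+\log J(v).
\]
The Jacobian term is bounded by $\kappa_J r^2/2$ using \eqref{eq:JacobianResidual}. For the Gaussian term, set $e:=\Psi(v)-\tilde\mu-Nv$, so that $\|e\|\le\kappa_\Psi r^2/2$ by \eqref{eq:ChartResidual}. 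Expanding the quadratic form with $u:=\tilde\mu+Nv-\mu$ gives
\[
-\tfrac12[(u+e)^\top\Omega(u+e)-u^\top\Omega u]=-u^\top\Omega e-\tfrac12 e^\top\Omega e .
\]
Since $\|u\|\le\|\delta\|+r$, this has absolute value at most $\|\Omega\|(\|\delta\|+r)\kappa_\Psi r^2/2+\|\Omega\|\kappa_\Psi^2r^4/8$. Hence $\sup_{B_r}|h|\le\eta_r$.

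The remaining and main obstacle is converting this sup bound into $\TV\le\tanh(\eta_r)$. After normalization, write the two densities as $p=e^{h}q/\E_q e^{h}$ with $h\in[a,b]$ and $b-a\le 2\eta_r$. Then
\[
\TV=\int(p-q)_+,
\]
and the likelihood ratio lies in $[e^{a-c},e^{b-c}]$ for some $c\in[a,b]$. The extremal case is a two-point distribution for $h$, which gives $\TV\le (e^{b-a}-1)/(e^{b-a}+1)=\tanh((b-a)/2)\le\tanh(\eta_r)$. I would prove this by noting that for a fixed ratio range $[m,M]$ with $m\le1\le M$, the TV is maximized by a two-valued ratio, which yields $(M-1)(1-m)/(M-m)$. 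Maximizing over $M/m=e^{2\eta_r}$ then gives $\tanh(\eta_r)$, the maximum being attained at $m=1/M$ by symmetry (AM–GM). Combining the three terms gives \eqref{eq:W2CondMain}.
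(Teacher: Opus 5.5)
Your proposal is correct and follows essentially the same route as the paper: the three-term triangle inequality through the truncations to $\mathcal M_r$, tail terms of the form $\tau_P(r)+L_\Psi r\sqrt{\varepsilon_P}$ obtained by coupling, the diameter--TV bound with diameter $2L_\Psi r$, and the log-density expansion giving $|h|\le\eta_r$ on $B_r$, converted to TV through a likelihood-ratio bound. One arithmetic slip is harmless: your two-point extremization with $M/m=e^{b-a}$ actually gives $\tanh((b-a)/4)\le\tanh(\eta_r/2)$ (attained at $m=1/M$), not $\tanh((b-a)/2)$, so your stated $\TV\le\tanh(\eta_r)$ still holds, and your unsymmetrized argument is in fact slightly sharper than the paper's, which only uses $p_r/q_r\in[e^{-2\eta_r},e^{2\eta_r}]$.
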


\begin{lemma}[Truncation bound]\label{lem:truncation}
Let $P$ be a probability measure on a subset of $\mathbb R^m$ with finite second moment.
Let $A$ be measurable with $P(A)=1-\varepsilon\in(0,1]$, and let $P(\cdot\mid A)$ be the conditional law.
Fix any $x_0\in A$ and define $R_A:=\sup_{x\in A}\|x-x_0\|$ (possibly $+\infty$).
Then
\begin{equation}\label{eq:trunc-bound}
\begin{aligned}
W_2\big(P,\;P(\cdot\mid A)\big)
&\le
\Big(\E\big[\|X-x_0\|^2\mathbf 1_{A^c}\big]\Big)^{1/2}
\\
&\quad+\ R_A\sqrt{\varepsilon},
\qquad (X\sim P).
\end{aligned}
\end{equation}
\end{lemma}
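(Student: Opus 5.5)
We construct an explicit coupling of $P$ and $P(\cdot\mid A)$ that leaves every point of $A$ unmoved and relocates only the mass on $A^c$. We then bound the transport cost by \eqref{eq:W2upper} and Minkowski's inequality. Two degenerate cases are dispatched first. If $R_A=+\infty$ the right-hand side of \eqref{eq:trunc-bound} is $+\infty$ (or has the form $\infty\cdot 0$ when $\varepsilon=0$). If $\varepsilon=0$ then $P(\cdot\mid A)=P$ and both sides vanish. Hence we may assume $R_A<\infty$ and $\varepsilon\in(0,1)$.

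\textbf{Coupling.} Let $X\sim P$ and, independently, let $X''\sim P(\cdot\mid A)$. Set
\[
X' := X\,\mathbf 1_{\{X\in A\}} + X''\,\mathbf 1_{\{X\notin A\}}.
\]
We check that $\mathcal L(X')=P(\cdot\mid A)$. For Borel $B$, independence gives
\[
\Prob(X'\in B)=P(B\cap A)+\varepsilon\,\frac{P(B\cap A)}{1-\varepsilon}=\frac{P(B\cap A)}{1-\varepsilon}.
\]
So $(X,X')$ is a coupling of $P$ and $P(\cdot\mid A)$. Moreover $X'\in A$ almost surely, and $X'=X$ on $\{X\in A\}$.

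\textbf{Cost estimate.} By \eqref{eq:W2upper},
\[
W_2\big(P,P(\cdot\mid A)\big)\le \big(\E[\|X-X'\|^2\mathbf 1_{A^c}(X)]\big)^{1/2}.
\]
Write $X-X'=(X-x_0)-(X'-x_0)$ and apply Minkowski's inequality in $L^2$ to the two terms multiplied by $\mathbf 1_{A^c}(X)$. The first term is exactly $(\E[\|X-x_0\|^2\mathbf 1_{A^c}])^{1/2}$, which is finite because $P$ has a finite second moment. For the second term, $X'\in A$ almost surely, so $\|X'-x_0\|\le R_A$ and
\[
\big(\E[\|X'-x_0\|^2\mathbf 1_{A^c}(X)]\big)^{1/2}\le R_A\sqrt{\Prob(X\notin A)}=R_A\sqrt{\varepsilon}.
\]
Adding the two terms yields \eqref{eq:trunc-bound}.

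There is no real obstacle in this argument. The only step requiring care is verifying that the mixed variable $X'$ has exactly the conditional law, which uses the independence of $X''$ from $X$, and noting that the bound does not depend on any sharper choice of where to send the mass of $A^c$.
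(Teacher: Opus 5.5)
Your proof is correct and is essentially the paper's argument. The paper passes through the intermediate measure $\tilde P=(1-\varepsilon)P(\cdot\mid A)+\varepsilon\,\delta_{x_0}$ and applies the triangle inequality for $W_2$; you instead glue its two couplings (send the $A^c$-mass to $x_0$, then redistribute it according to $P(\cdot\mid A)$) into the single coupling $(X,X')$ and split at $x_0$ with Minkowski in $L^2$, which gives the same two terms without invoking the $W_2$ triangle inequality.
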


\begin{proof}
Define the auxiliary measure $\tilde P:=(1-\varepsilon)\,P(\cdot\mid A)+\varepsilon\,\delta_{x_0}$.
Couple $P$ with $\tilde P$ by keeping the mass on $A$ fixed and sending all $A^c$ mass to $x_0$.
By~\eqref{eq:W2upper}, this gives
\[
W_2(P,\tilde P)
\le
\Big(\E\big[\|X-x_0\|^2\mathbf 1_{A^c}\big]\Big)^{1/2}.
\]
Next, couple $\tilde P$ to $P(\cdot\mid A)$ by moving the extra point mass $\varepsilon\,\delta_{x_0}$ into $A$
according to $P(\cdot\mid A)$, with cost at most $R_A\sqrt{\varepsilon}$.
The claim follows by the triangle inequality. \qedhere
\end{proof}

\begin{lemma}[Diameter--TV control of $W_2$ on bounded support]\label{lem:diam-tv}
Let $p,q$ be probability measures supported on a set $S\subset\mathbb R^d$ with
$\diam(S):=\sup_{u,v\in S}\|u-v\|<\infty$.
Then
\begin{equation}\label{eq:diam-tv-bound}
\begin{aligned}
W_2(p,q) &\le \diam(S)\,\sqrt{\TV(p,q)},
\\
\TV(p,q) &:= \frac12\int|dp-dq|.
\end{aligned}
\end{equation}
\end{lemma}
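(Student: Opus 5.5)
We prove Lemma~\ref{lem:diam-tv} with the standard maximal-overlap coupling. Throughout, write $\theta:=\TV(p,q)\in[0,1]$. If $\theta=0$ then $p=q$ and the bound is trivial. If $\theta=1$ then any coupling, for instance the product coupling $p\otimes q$, moves every unit of mass a distance at most $\diam(S)$. By~\eqref{eq:W2upper} this gives $W_2(p,q)\le\diam(S)$, which is exactly the claimed bound.

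So assume $0<\theta<1$. Let $\lambda:=p+q$ and take densities $p',q'$ with respect to $\lambda$. Define the common part $m:=\min(p',q')\,\lambda$. Its total mass is $m(S)=1-\theta$, by the identity $\int\min(p',q')\,d\lambda=1-\tfrac12\int|p'-q'|\,d\lambda$. The residuals $p-m$ and $q-m$ are nonnegative measures on $S$, each of mass $\theta$.

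We then build the coupling
\[
\pi:=(\mathrm{id},\mathrm{id})_\# m\;+\;\frac{1}{\theta}\,(p-m)\otimes(q-m).
\]
Its first marginal is $m+(p-m)=p$ and its second is $q$, so $\pi\in\Pi(p,q)$. The diagonal part of $\pi$ costs nothing. The product part has mass $\theta$ and lives on $S\times S$, where $\|u-v\|^2\le\diam(S)^2$. Hence
\[
\int\|u-v\|^2\,d\pi\le\theta\,\diam(S)^2 .
\]
Applying~\eqref{eq:W2upper} gives $W_2(p,q)\le\diam(S)\sqrt{\TV(p,q)}$.

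The only step needing care is verifying that $\int\min(p',q')\,d\lambda=1-\TV(p,q)$. This follows from the pointwise identity $\min(a,b)=\tfrac12(a+b-|a-b|)$ integrated against $\lambda$. Measurability of the product coupling is immediate. Finiteness of second moments is automatic because the support $S$ is bounded.
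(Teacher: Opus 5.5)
Your proof is correct and takes the same approach as the paper. Both use a maximal coupling, which costs zero on the diagonal and puts mass $\TV(p,q)$ off it, with each unit costing at most $\diam(S)^2$. The only difference is that you build that coupling explicitly, as the common part $\min(p',q')\,\lambda$ plus the normalized product of the residuals, where the paper cites its existence.
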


\begin{proof}
Let $\gamma$ be a maximal coupling of $(p,q)$ so that $\gamma\{(U,V):U\neq V\}=\TV(p,q)$~\cite{thorisson2000coupling}.
On $\{U=V\}$ the cost is $0$, and on $\{U\neq V\}$ we have $\|U-V\|\le \diam(S)$.
Thus $\E_\gamma\|U-V\|^2\le \diam(S)^2\,\TV(p,q)$, and the claim follows by infimizing over couplings. \qedhere
\end{proof}

\begin{lemma}[TV control from a likelihood-ratio bound]\label{lem:tv-lr}
Let $p,q$ be probability measures with $p\ll q$ and
\[
\frac{dp}{dq}\in[\Lambda^{-1},\Lambda]\qquad q\text{-a.e.},
\]
for some $\Lambda\ge 1$. Then
\[
\TV(p,q)\le \frac{\Lambda-1}{\Lambda+1}.
\]
\end{lemma}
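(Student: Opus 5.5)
Write $\ell:=dp/dq$, so that $\ell\in[\Lambda^{-1},\Lambda]$ $q$-a.e.\ and $\int\ell\,dq=1$. Because $p\ll q$, the total variation can be written as
\[
\TV(p,q)=\tfrac12\int|\ell-1|\,dq=\int(\ell-1)_+\,dq=\int(1-\ell)_+\,dq ,
\]
where the last two expressions agree because $\int(\ell-1)\,dq=0$. The plan is to choose a set on which the density ratio exceeds one and bound $\TV$ using only the two-sided constraint on $\ell$.

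\textbf{Step 1: reduce to a set and its complement.} Let $B:=\{\ell>1\}$ and $a:=q(B)$. Then $\TV(p,q)=p(B)-q(B)=q(B^c)-p(B^c)$.

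\textbf{Step 2: upper bound from each side.}
On $B$ we have $\ell\le\Lambda$, which gives
\[
p(B)-q(B)\le(\Lambda-1)\,q(B)=(\Lambda-1)\,a .
\]
On $B^c$ we have $\ell\ge\Lambda^{-1}$, so $p(B^c)\ge\Lambda^{-1}q(B^c)$. This gives
\[
q(B^c)-p(B^c)\le(1-\Lambda^{-1})(1-a).
\]
Hence
\[
\TV\le\min\bigl\{(\Lambda-1)a,\ (1-\Lambda^{-1})(1-a)\bigr\}.
\]

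\textbf{Step 3: optimize over $a$.} The first bound increases in $a$ and the second decreases, so the minimum over $a\in[0,1]$ is at most the value where they cross. Setting $(\Lambda-1)a=\frac{\Lambda-1}{\Lambda}(1-a)$ gives $\Lambda a=1-a$, that is, $a=1/(\Lambda+1)$. The common value there is $(\Lambda-1)/(\Lambda+1)$, which is the claim.

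\textbf{Main obstacle.} The only delicate point is handling the null sets. The bounds on $\ell$ hold only $q$-a.e., and since $p\ll q$ they also hold $p$-a.e.; this is enough for every integral used above. The edge case $\Lambda=1$ forces $\ell=1$ a.e., so $\TV=0$, which is consistent with the bound. As a remark for the later use in Theorem~\ref{thm:W2-cond}, taking $\Lambda=e^{2\eta}$ gives $\TV\le\tanh(\eta)$.
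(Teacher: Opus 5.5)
Your proof is correct, but it takes a different route from the paper. The paper argues by convexity. It bounds $|h-1|$ on $[\Lambda^{-1},\Lambda]$ by the chord of $x\mapsto|x-1|$ between the endpoints, then takes $q$-expectations using $\E_q[h]=1$ to get $\E_q|h-1|\le 2(\Lambda-1)/(\Lambda+1)$. Implicitly, the extremal likelihood ratio is a two-point law on $\{\Lambda^{-1},\Lambda\}$ with mean one. You instead use the Scheff\'e representation $\TV(p,q)=p(B)-q(B)=q(B^c)-p(B^c)$ with $B=\{\ell>1\}$. You apply the upper bound $\ell\le\Lambda$ only on $B$ and the lower bound $\ell\ge\Lambda^{-1}$ only on $B^c$, and then solve a one-parameter max--min problem in $a=q(B)$.

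One wording fix in Step~3: what you need is that, for every $a\in[0,1]$, the pointwise minimum of your two bounds is at most their common value at $a=1/(\Lambda+1)$. Your monotonicity argument shows exactly this; ``the minimum over $a$'' is not what you mean.

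Your argument is more elementary, since it needs no convexity, only two one-sided inequalities. It also shows which side of the ratio bound controls which part of the mass. It yields the intermediate bound $\min\{(\Lambda-1)q(B),(1-\Lambda^{-1})q(B^c)\}$, and your crossing point $a=1/(\Lambda+1)$ is exactly the $q$-mass the extremal two-point law puts on $\{h=\Lambda\}$. The paper's chord argument is shorter and carries over verbatim to any convex $f$-divergence $\E_q f(h)$ under the same likelihood-ratio bound, whereas your decomposition is specific to TV.
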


\begin{proof}
Write $h:=dp/dq\in[\Lambda^{-1},\Lambda]$, so $\TV(p,q)=\frac12\E_q[|h-1|]$ with $\E_q[h]=1$.
Since $x\mapsto|x-1|$ is convex on $[\Lambda^{-1},\Lambda]$, linear interpolation gives
$\E_q[|h-1|]\le 2(\Lambda-1)/(\Lambda+1)$, hence the result.
\qedhere
\end{proof}

\begin{proof}[Proof of Theorem~\ref{thm:W2-cond}]
Recall $P_{\mathrm{cond}}$ from \eqref{eq:cond-def},
$Q_T$ from \eqref{eq:QTDef}, and
$\widehat P_{\mathrm{cond}}=\bar\Psi_\# Q_T$ from \eqref{eq:PcondHatDef}.
Let $\mathcal M_r=\Psi(B_r)$, and define truncations
\[
P_{\mathrm{cond}}^{(r)}:=P_{\mathrm{cond}}(\cdot\mid \mathcal M_r),
\qquad
\widehat P_{\mathrm{cond}}^{(r)}:=\widehat P_{\mathrm{cond}}(\cdot\mid \mathcal M_r).
\]
We first separate tail and interior contributions. By the triangle inequality,
\begin{equation}\label{eq:tri-cond}
\begin{aligned}
W_2(P_{\mathrm{cond}},\widehat P_{\mathrm{cond}})
&\le
W_2(P_{\mathrm{cond}},P_{\mathrm{cond}}^{(r)})
+
W_2(P_{\mathrm{cond}}^{(r)},\widehat P_{\mathrm{cond}}^{(r)})
\\
&\quad+
W_2(\widehat P_{\mathrm{cond}}^{(r)},\widehat P_{\mathrm{cond}}).
\end{aligned}
\end{equation}
Lemma~\ref{lem:truncation} with $A=\mathcal M_r$, $x_0=\tilde\mu$, and $\sup_{x\in\mathcal M_r}\|x-\tilde\mu\|\le L_\Psi r$ bounds the two outer (tail) terms:
\begin{equation}\label{eq:tail-P-simplified}
W_2(P_{\mathrm{cond}},P_{\mathrm{cond}}^{(r)})
\le
\tau_P(r)+L_\Psi r\sqrt{\varepsilon_P},
\end{equation}
\begin{equation}\label{eq:tail-Phat}
W_2(\widehat P_{\mathrm{cond}}^{(r)},\widehat P_{\mathrm{cond}})
\le
\tau_{\widehat P}(r)+L_\Psi r\sqrt{\varepsilon_Q}.
\end{equation}

For the interior, pull both truncated laws back to $B_r$ via the bijection $\Psi$.
Define unnormalized densities $\tilde p(v):=\varphi_{\mu,\Sigma}(\Psi(v))\,J(v)$ and
$\tilde q(v):=\varphi_{\mu,\Sigma}(\tilde\mu+Nv)$, and let $p_r:=\tilde p/\int_{B_r}\tilde p$, $q_r:=\tilde q/\int_{B_r}\tilde q$.
Then $P_{\mathrm{cond}}^{(r)}=\Psi_\# p_r$ and $\widehat P_{\mathrm{cond}}^{(r)}=\Psi_\# q_r$.
By~\eqref{eq:W2LipPush} and Lemma~\ref{lem:diam-tv} ($\diam(B_r)\le 2r$),
\[
W_2(P_{\mathrm{cond}}^{(r)},\widehat P_{\mathrm{cond}}^{(r)})
\le
2L_\Psi r\,\sqrt{\TV(p_r,q_r)}.
\]
Write $e(v):=\Psi(v)-(\tilde\mu+Nv)$.
By~\eqref{eq:ChartResidual}, $\|e(v)\|\le \frac{\kappa_\Psi}{2}r^2$ on $B_r$.
Expanding the log-density difference and applying~\eqref{eq:JacobianResidual} gives $|\log \tilde p(v)-\log \tilde q(v)|\le \eta_r$ on $B_r$.
After normalization, $p_r/q_r\in[e^{-2\eta_r},e^{2\eta_r}]$, and Lemma~\ref{lem:tv-lr} yields $\TV(p_r,q_r)\le \tanh(\eta_r)$.
Inserting the tail and interior bounds into~\eqref{eq:tri-cond} gives~\eqref{eq:W2CondMain}. \qedhere
\end{proof}

\paragraph{Gaussian specialization}
For $Q_T=\mathcal N(m_T,\Sigma_T)$ from~\eqref{eq:QTParams}, the tail $\varepsilon_Q:=Q_T(\mathbb R^d\setminus B_r)$ is bounded by Gaussian concentration as in~\eqref{eq:GaussianTail} with $(\delta,\Sigma)\to(m_T,\Sigma_T)$.
If the extension satisfies $\|\bar\Psi(v)-\tilde\mu\|\le L_{\mathrm{ext}}(1+\|v\|)$ on $\mathbb R^d\setminus B_r$, then Cauchy--Schwarz gives $\tau_{\widehat P}(r)\le L_{\mathrm{ext}}\,\big(\mathbb E(1+\|V\|)^4\big)^{1/4}\varepsilon_Q^{1/4}$; for a constant tail extension to $z_0$, one may instead use $\tau_{\widehat P}(r)\le\|z_0-\tilde\mu\|\sqrt{\varepsilon_Q}$.
The Gaussian fourth moment appearing here has the same form as~\eqref{eq:GaussianFourthMoment}.
The remaining $(\varepsilon_P,\tau_P(r))$ quantify the true conditioned mass outside $\mathcal M_r$.

\paragraph{Interpretation and numerical illustration}
Equation~\eqref{eq:W2CondMain} splits conditioning error into tail/truncation penalties ($\tau_P,\tau_{\widehat P},\varepsilon_P,\varepsilon_Q$ terms) and interior chart distortion $2L_\Psi r\sqrt{\tanh(\eta_r)}$, where $\eta_r$ combines second-order chart residual and Jacobian distortion weighted by $\|\Omega\|$.
For affine manifolds ($\kappa_\Psi=\kappa_J=0$), $\eta_r=0$ and only truncation remains.
In robotics, enforcing hard constraints (e.g.\ unit-quaternion normalization, contact membership) is conditioning, and $\eta_r$ quantifies how constraint curvature and prior precision jointly warp the tangent approximation.
When $\eta_r\ll 1$, a single tangent chart is faithful, but as $\eta_r$ grows the chart distortion dominates and manifests as miscalibrated credible regions on $\mathcal{M}$.
Figure~\ref{fig:teaser-cond} illustrates this transition: near-local regimes show close agreement,
whereas larger spread introduces mode/variance mismatch and then tail-dominated failure.

\section{Numerical experiments}\label{sec:numerics}
\subsection{2-D circle benchmark}
\paragraph{Goals and setup}
We consider $\mathcal M=\{x\in\mathbb R^2:\|x\|=R\}$ with curvature $\kappa=1/R$, reach $\rho=R$, and
$X\sim\mathcal N(\mu,\Sigma)$ where $\mu=(R+\delta,0)$.
The baseline uses isotropic covariance $\Sigma=\sigma^2 I_2$ with $\delta=0.2$.
We then add two stress tests to assess robustness: (i) anisotropic covariance,
$\Sigma=\mathrm{diag}(\sigma_n^2,\sigma_t^2)$ in the normal/tangential frame, and
(ii) offset sweeps in $\delta/R$.
These tests probe the computable diagnostics in Theorem~\ref{thm:W2-marg}, especially the roles of
$\sqrt{\|\Sigma\|_{\mathrm{op}}}$ and the tail-leakage term $\varepsilon$.
For marginalization, the exact map is $g(x)=R\,x/\|x\|$, and the approximation is tangent projection plus
normalization at $\tilde\mu=g(\mu)=(R,0)$.
For conditioning, we compare surface-measure conditioning on
$\mathcal M$ against tangent-line conditioning followed by retraction.

\paragraph{Quantitative sweep}
Figure~\ref{fig:overconf-metrics} sweeps $\sigma/R\in[0.02,1.2]$ for $R\in\{0.5,1,2\}$
and reports (i) variance ratio $\varrho=\mathrm{Var}_{\mathrm{lin}}(\theta)/\mathrm{Var}_{\mathrm{exact}}(\theta)$,
(ii) coverage of a nominal $95\%$ interval, and (iii) normalized theoretical/empirical
$W_2$ diagnostics for marginalization.
Panel~(a) shows $\varrho$ beginning to drop after the $\sigma/R=1/6$ locality marker and crossing below $1$ only at larger spread. Panel~(b) shows corresponding
coverage degradation outside the local regime. Panel~(c) uses
$\sqrt{\|\Sigma\|_{\mathrm{op}}}/R$ and compares empirical $W_2$ proxies against tightened
theoretical $W_2$ bounds with data-calibrated $C_{\mathrm{tail}}$
(90\% quantile pre-fit with a strict upper-envelope safeguard),
so the bound remains above the empirical proxy while preserving the failure-onset trend.

\begin{figure}[t]
  \centering
  \includegraphics[width=\linewidth]{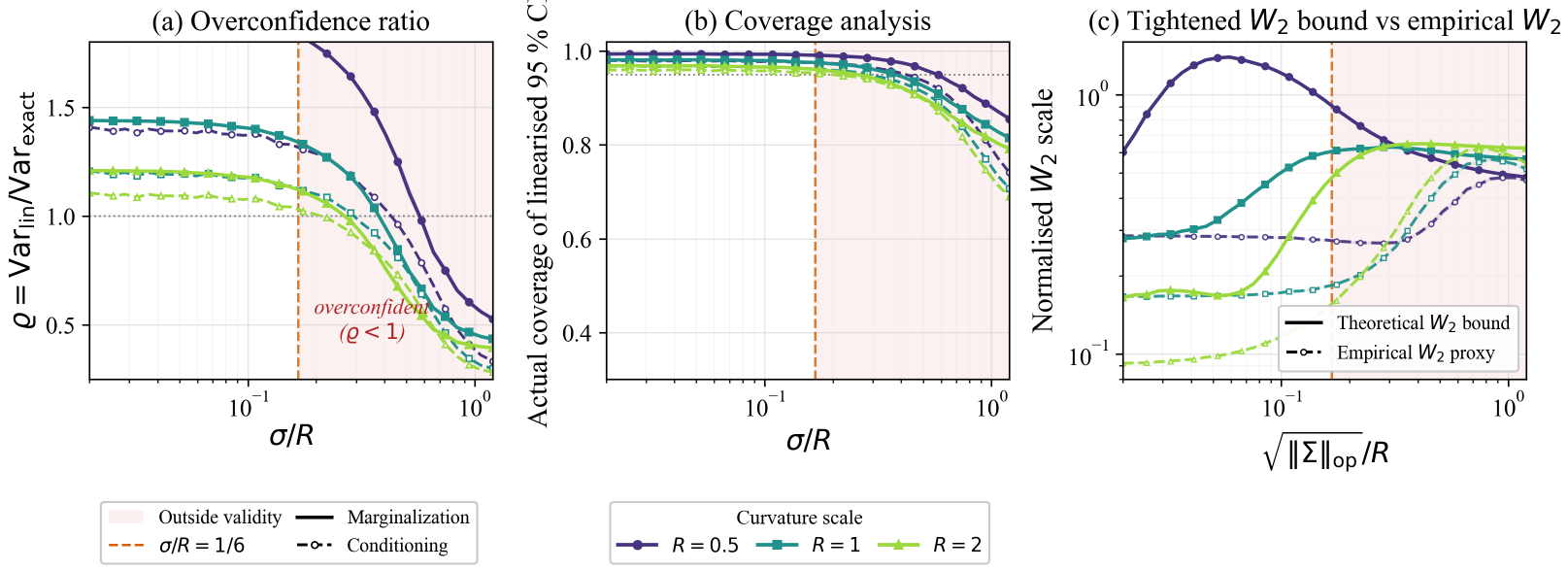}
  \caption{Circle sweep ($\delta=0.2$) over $\sigma/R$:
  (a) variance ratio $\varrho=\mathrm{Var}_{\mathrm{lin}}/\mathrm{Var}_{\mathrm{exact}}$,
  (b) realized $95\%$ coverage, and
  (c) normalized diagnostics on $\sqrt{\|\Sigma\|_{\mathrm{op}}}/R$:
  tightened theoretical $W_2$ bound
  (data-calibrated $C_{\mathrm{tail}}$ with strict upper-envelope safeguard) and empirical $W_2$ proxy.
  Dashed line marks $\sigma/R=1/6$. Colors indicate $R\in\{0.5,1,2\}$.}
  \label{fig:overconf-metrics}
\end{figure}

\paragraph{Anisotropic $\Sigma$ and offset sweeps}
Figure~\ref{fig:overconf-generality} extends the circle benchmark along two complementary axes.
Panel~(a) varies anisotropy $\eta:=\sigma_n/\sigma_t\in\{0.5,1,2,4\}$ and replots calibration against
the spectral scale $\sqrt{\|\Sigma\|_{\mathrm{op}}}/R$.
The transition remains aligned across anisotropy levels, supporting $\|\Sigma\|_{\mathrm{op}}$ as an effective
diagnostic variable in the bound.
Panels~(b)--(c) fix anisotropy and sweep $\delta/R$:
as offset increases, the tail proxy $\varepsilon$ rises rapidly and coverage degrades even at fixed
curvature/noise scale, revealing a second failure axis beyond curvature-only effects.
This behavior matches the decomposition in Theorem~\ref{thm:W2-marg}, where local distortion and
nonlocal leakage contribute separately.
Formally, $\|\delta\|$ enters both \eqref{eq:GaussianFourthMoment} and \eqref{eq:GaussianTail}, confirming mean offset as a failure axis independent of $\sqrt{\|\Sigma\|_{\mathrm{op}}}/\rho$.

\paragraph{Conditioning comparison}
Applying the same circle benchmark to conditioning (Figure~\ref{fig:teaser-cond}), the tangent-chart approximation breaks down \emph{before} its marginalization counterpart: the conditioning variance ratio approaches unity and crosses into the below-unity, overconfident regime at a smaller $\sigma/R$ than marginalization (${\approx}0.30$ vs.\ ${\approx}0.38$), consistent with Theorem~\ref{thm:W2-cond} where $\kappa_J$ and $\|\Omega\|\kappa_\Psi$ jointly amplify chart distortion beyond what curvature alone causes for projection.
Already at $\sigma/R\approx 1/6$, conditioning retains less overconfidence headroom than marginalization (variance ratio ${\approx}1.13$ vs.\ ${\approx}1.34$).
At $\sigma/R\approx 0.5$ the exact conditioned density exhibits a qualitatively different concentration pattern---different kurtosis and tail weight---compared to the tangent-chart approximation, illustrating density-shape artifacts that $W_2$ may underestimate (see Section~\ref{sec:discussion}).

\begin{figure}[t]
  \centering
  \includegraphics[width=\linewidth]{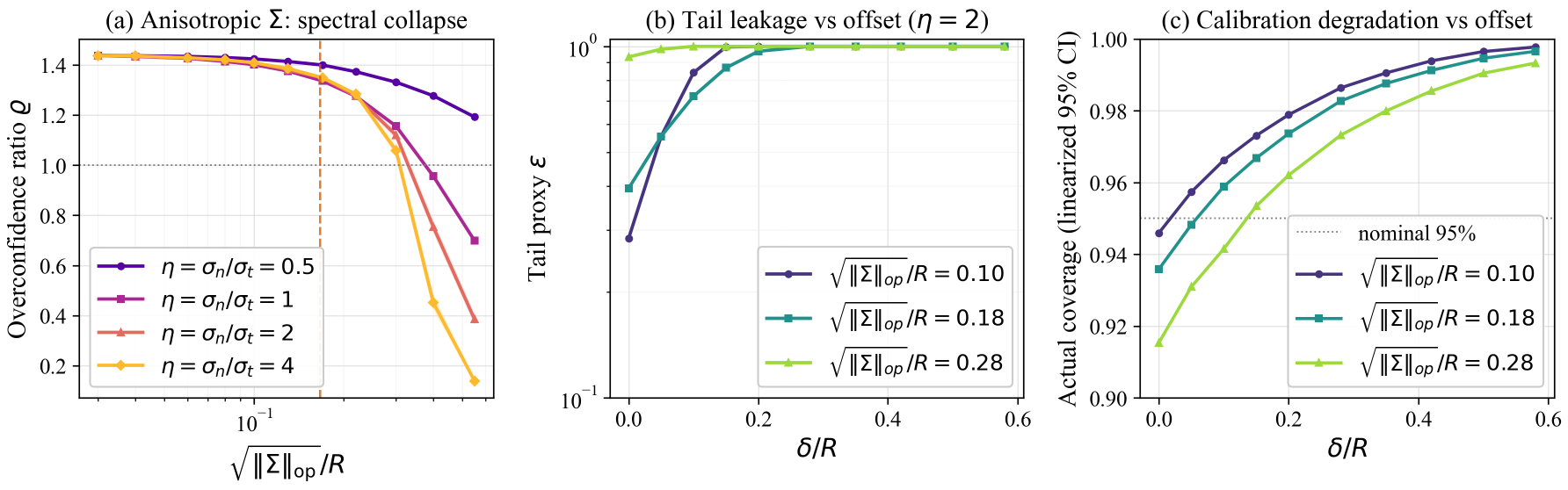}
  \caption{Circle generality stress tests:
  (a) anisotropic sweep ($\eta=\sigma_n/\sigma_t$) versus $\sqrt{\|\Sigma\|_{\mathrm{op}}}/R$,
  (b) tail proxy $\varepsilon$ versus offset $\delta/R$, and
  (c) coverage degradation versus $\delta/R$.}
  \label{fig:overconf-generality}
\end{figure}

\subsection{Planar pushing: diagnostics in a contact-rich task}\label{sec:numerics_planar_pushing}
\paragraph{Setup and covariance models}
We use the standard 2-D planar-pushing benchmark from constrained GTSAM/InCOpt to evaluate covariance extraction under hard contact constraints~\cite{qadri2022incopt,kaess2011isam2}. A rectangular box state
$x_k=(p_{x,k},p_{y,k},\theta_k)\in\mathbb R^2\times\mathbb S^1$
is pushed by a circular probe whose center follows a known deterministic path. The probe is assumed to remain in contact with the box boundary at every step, encoded as a per-pose contact factor $c_k(x_k)=0$. Odometry provides relative-pose measurements with wrapped-angle noise,
$u_k=x_{k+1}-x_k+\varepsilon_k$, $\varepsilon_k\sim\mathcal N(0,Q)$,
$Q=\mathrm{diag}(0.03^2,0.03^2,0.01^2)$.
We use $r_p=0.1$\,m, $(w,h)=(0.2,1.0)$\,m, and $n_{\mathrm{steps}}=50$.
The constrained mean is obtained from constrained least squares:
\[
\min_{x_{1:n}}\ \sum_{k=0}^{n-1}\|r_k(x)\|_{Q^{-1}}^2
\quad\text{s.t.}\quad c_k(x_k)=0,\ \forall k,
\]
with $r_k(x):=x_{k+1}-(x_k+u_k)$.
Linearizing about the optimized trajectory, we form the (unconstrained) Gauss--Newton information
$F\approx H^\top W^{-1}H$ with $W=\mathrm{blkdiag}(Q,\dots,Q)$, giving
$\Sigma_{\mathrm{unc}}=F^{-1}$, and compute the constrained covariance by conditioning onto the linearized contact constraints:
$\Sigma_{\mathrm{con}}=N(N^\top F N)^{-1}N^\top$, where $N\in\mathrm{null}(S^\top)$ spans the tangent directions of the constraint manifold.

\paragraph{Aggregate covariance reduction}
Across $49$ optimized timesteps, equality constraints are satisfied to numerical precision.
Table~\ref{tab:planar-pushing} reports representative timesteps:
the constrained $xy$-marginal covariance trace is reduced by $50.5\%$ at $k=1$ and $57.3\%$ at
$k=49$, with a peak reduction of $58.2\%$ at $k=37$.
These values establish a trajectory-level baseline before the Monte Carlo calibration diagnostics.

\begin{table}[t]
\centering
\caption{Quantitative covariance metrics for planar pushing at timesteps}
\label{tab:planar-pushing}
\footnotesize
\begin{tabular}{c|cc|cc|c|c}
\hline
 & \multicolumn{2}{c|}{$\mathrm{tr}(\Sigma_{xy})$} & \multicolumn{2}{c|}{$\lambda_{\max}(\Sigma_{xy})$} & Red. & $\|\Delta\Sigma_{xy}\|_F$ \\
$k$ & Unc. & Con. & Unc. & Con. & (\%) & \\
\hline
1 & 0.0018 & 0.0009 & 0.0009 & 0.0009 & 50.5 & 0.0009 \\
13 & 0.0234 & 0.0105 & 0.0117 & 0.0105 & 55.3 & 0.0118 \\
25 & 0.0450 & 0.0190 & 0.0225 & 0.0190 & 57.8 & 0.0228 \\
37 & 0.0666 & 0.0278 & 0.0333 & 0.0278 & 58.2 & 0.0337 \\
49 & 0.0882 & 0.0377 & 0.0441 & 0.0377 & 57.3 & 0.0446 \\
\hline
\end{tabular}
\end{table}

\paragraph{Trajectory-wide diagnostics}
To move beyond single-pose checks, we evaluate diagnostics at every timestep.
Let $c_k(x)=0$ be the contact constraint at step $k$, linearized at $\hat x_k$.
With $N_k$ spanning $\mathrm{null}\!\big(\nabla c_k(\hat x_k)^\top\big)$, we define
\begin{equation}\label{eq:curv-proxy}
\hat\kappa_k
:=
\frac{\left\|N_k^\top \nabla^2 c_k(\hat x_k) N_k\right\|_{\mathrm{op}}}
{\left\|\nabla c_k(\hat x_k)\right\|},
\qquad
\hat\rho_k
:=
\frac{1}{\max(\hat\kappa_k,\epsilon_\kappa)},
\end{equation}
as curvature/reach proxies ($\epsilon_\kappa=10^{-8}$ for numerical stability), and the uncertainty spread
\begin{equation}\label{eq:spread-def}
s_k:=\sqrt{\|\Sigma_{\mathrm{unc},k}\|_{\mathrm{op}}}.
\end{equation}
At each $k$, we compare tangent-plane covariance
$\Sigma_{\mathrm{lin},k}:=\Pi_k\Sigma_{\mathrm{unc},k}\Pi_k^\top$
against Monte Carlo covariance $\Sigma_{\mathrm{MC},k}$ after exact nonlinear projection, via
\begin{equation}\label{eq:mismatch-def}
\varrho_k^{\mathrm{MC}}
:=\frac{\tr(\Sigma_{\mathrm{MC},xy,k})}
{\tr(\Sigma_{\mathrm{lin},xy,k})},
\;
\Delta_k
:=\frac{\|\Sigma_{\mathrm{MC},xy,k}-\Sigma_{\mathrm{lin},xy,k}\|_F}
{\|\Sigma_{\mathrm{lin},xy,k}\|_F}.
\end{equation}
Figure~\ref{fig:planar_pushing_temporal} shows a non-uniform failure profile over time.
Larger mismatch co-occurs with larger locality-pressure indicators ($s_k/\hat\rho_k$),
consistent with the tail-leakage interpretation in Theorem~\ref{thm:W2-marg}.
The largest mismatch occurs at $k=13$
($\varrho_{13}^{\mathrm{MC}}=2.162$, locality index $s_{13}/\hat\rho_{13}=0.119$).
The median trajectory-wide mismatch is $\mathrm{median}(\varrho_k^{\mathrm{MC}})=1.557$ with
median relative Frobenius discrepancy $\mathrm{median}(\Delta_k)=0.525$.

\begin{figure}[t]
  \centering
  \includegraphics[width=0.9\linewidth]{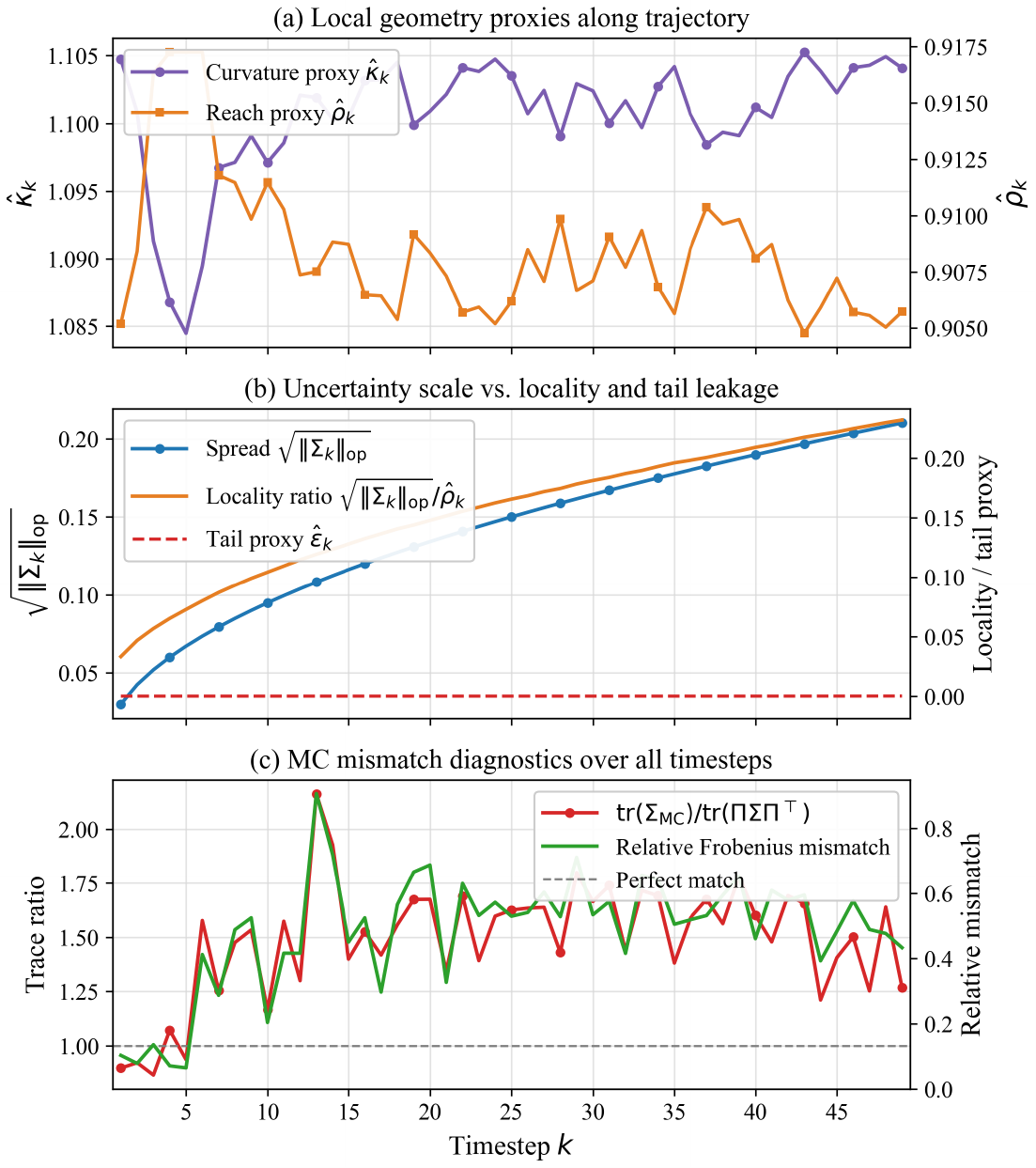}
  \caption{Trajectory-wide planar-pushing diagnostics:
  (a) curvature/reach proxies $(\hat\kappa_k,\hat\rho_k)$,
  (b) spread $s_k$ and locality indicator $s_k/\hat\rho_k$,
  (c) Monte Carlo mismatch $(\varrho_k^{\mathrm{MC}},\Delta_k)$.}
  \label{fig:planar_pushing_temporal}
\end{figure}

\paragraph{Directional uncertainty stress test}
Isotropic scaling $\Sigma\mapsto \alpha^2\Sigma$ obscures direction-specific failures.
At the terminal pose $k^\star$, with local unit normal $\hat n$, define
$A(\alpha_n,\alpha_t):=\alpha_t(I-\hat n\hat n^\top)+\alpha_n \hat n\hat n^\top$
and $\Sigma(\alpha_n,\alpha_t):=A\,\Sigma_{\mathrm{unc},k^\star}\,A^\top$.
We compare isotropic $(\alpha,\alpha)$, normal-only $(\alpha,1)$, and tangential-only $(1,\alpha)$ sweeps.
Figure~\ref{fig:planar_pushing_directional} shows a clear asymmetry for $\alpha\ge 1$:
normal-direction inflation causes the largest calibration and covariance mismatch.
This normal-tangent asymmetry provides experimental confirmation of the mechanism in Theorem~\ref{thm:W2-marg}: because $\|\Sigma\|_{\mathrm{op}}$ captures the worst-case (normal) spread, the bound tightens when the dominant eigenvalue aligns with the constraint normal, while remaining safely conservative for tangent-dominant uncertainty.

\begin{figure}[t]
  \centering
  \includegraphics[width=\linewidth]{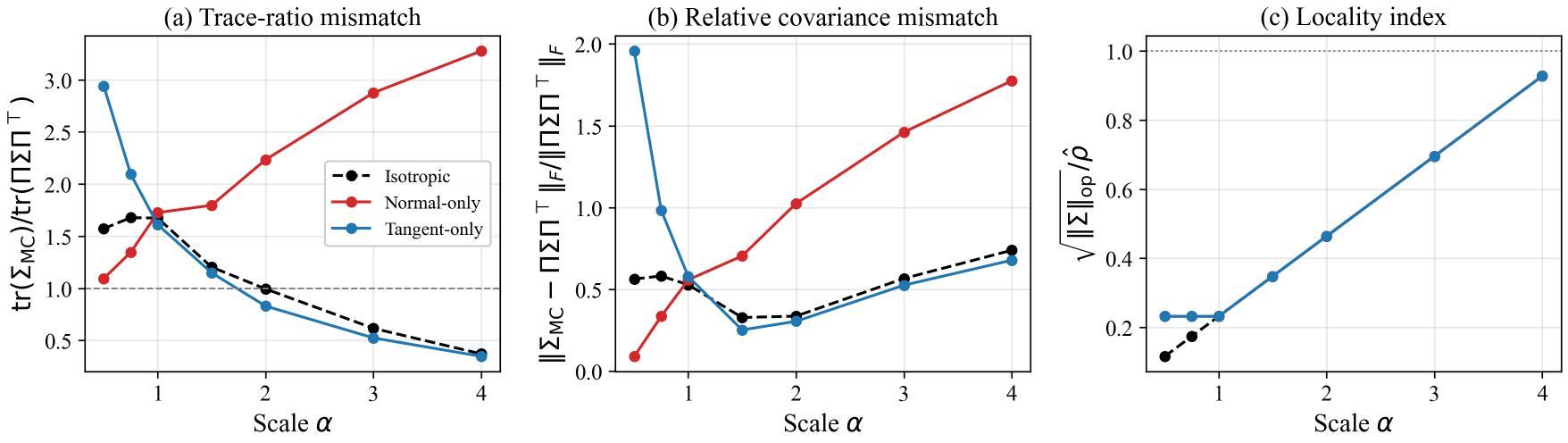}
  \caption{Directional scaling at the terminal planar-pushing pose:
  isotropic, normal-only, and tangential-only covariance inflation.}
  \label{fig:planar_pushing_directional}
\end{figure}

\section{Discussion}\label{sec:discussion}
\paragraph{Practical validity test for robotics estimation}
The bounds give a runtime calibration gate for factor-graph pipelines.
At each linearization point $\tilde\mu$, compute $k_\rho:=\sqrt{\|\Sigma\|_{\mathrm{op}}}/\rho$ and $s_\delta:=\|\delta\|/\rho$ from the marginal covariance, local reach/curvature, and linearization offset.
Curvature is analytic for $S^3$, $\mathrm{SO}(3)$, and $\mathrm{SE}(3)$, or can be approximated from the constraint Jacobian/Hessian as in Section~\ref{sec:numerics_planar_pushing}.
If $k_\rho\lesssim 1/6$ and $s_\delta\ll 1$, the tangent approximation is reliable; otherwise the decomposition suggests iterated relinearization for large $k_\rho$ with small $s_\delta$~\cite{bell1993iterated}, immediate relinearization for large $s_\delta$ (e.g.\ after a loop closure), and multi-chart~\cite{cesic2017mixture,nascimento2014manifold} or particle~\cite{gordon1993novel,koval2017manifold} methods when both are large.
For conditioning, $\eta_r\ll 1$ is an additional guard against chart-distortion artifacts that $W_2$ may underestimate.
The cost is one dominant-eigenvalue query plus a curvature lookup per variable; in GTSAM/iSAM2-style solvers, the covariance is available from the Bayes tree and curvature from analytic formulas or the constraint Hessian.
Unlike chi-squared innovation tests, which detect residual inconsistency after measurement incorporation~\cite{barfoot2024state}, these diagnostics predict linearization failure before the update and enable preemptive method switching.

\paragraph{$W_2$ scope and conditioning asymmetry}
Because $W_2$ measures mass displacement rather than density shape, similar bounds for marginalization and conditioning do not imply equal robustness.
Conditioning is more delicate: normalization constants such as $Z_T^{-1}$ in~\eqref{eq:QTDef} can amplify shape errors that $W_2$ does not penalize.
For example, conditioning a Gaussian onto $S^3$ at moderate $k_\rho$ may produce a narrow spherical cap with angular variance different from the tangent-Gaussian prediction, while $W_2$ remains moderate because the supports and centroids are close.
Projection-based marginalization is less sensitive to this artifact because the projection map fixes the support before normalization, making $W_2$ a more faithful calibration proxy there.
Conditioning should therefore be supplemented with coverage or KL checks when $k_\rho\gtrsim 0.1$.

\paragraph{Concrete regime examples}
For unit-quaternion orientation on $S^3$ ($\kappa=1$, $\rho=1$), $k_\rho=\sqrt{\|\Sigma\|_{\mathrm{op}}}$: IMU-grade uncertainty ($\lesssim 0.03$) is well below $1/6$, whereas prolonged dead reckoning ($\gtrsim 0.17$) exceeds it.
For planar contact with $R_c\approx 0.1$\,m, the threshold $\sqrt{\|\Sigma\|_{\mathrm{op}}}\lesssim R_c/6\approx 17$\,mm is compatible with calibrated sensors but can be violated by process-noise accumulation, as in Section~\ref{sec:numerics_planar_pushing}.
Loop-closure drift gives the complementary offset trigger: ${\sim}\,1\%$ drift implies $\|\delta\|\approx 0.5$\,m after $50$\,m, far beyond the locality radius.
For $\mathrm{SE}(3)$ with small rotation--translation cross-covariance~\cite{barfoot2024state}, rotational uncertainty follows the chosen $\mathrm{SO}(3)$ or unit-quaternion embedding constants, translational constraints inherit $k_\rho=\sqrt{\|\Sigma_{\mathrm{pos}}\|_{\mathrm{op}}}/R_c$, and the leading-order diagnostic is the maximum of both.

\paragraph{Interaction between offset and anisotropy}
In incremental solvers, $\|\delta\|$ and $k_\rho$ often grow together because state lag and covariance growth accumulate simultaneously.
When the dominant eigenvector of $\Sigma$ aligns with the offset, the cross-term $4\delta^\top\Sigma\delta$ in~\eqref{eq:GaussianFourthMoment} amplifies the local bound; if the offset is orthogonal to the dominant spread, only the additive $\|\delta\|^4$ term remains.
The anisotropic circle experiments confirm that coverage degrades fastest when $k_\rho$ and $s_\delta$ are large in the same direction.

\paragraph{Relation to nonlinear filtering methods}
The same decomposition clarifies what different filters improve.
Iterated Kalman filtering~\cite{bell1993iterated} reduces $s_\delta$ by re-evaluating at posterior modes, while invariant~\cite{bonnable2009invariant} and equivariant~\cite{van2022equivariant} filters exploit symmetry to reduce effective curvature on groups such as $\mathrm{SO}(3)$.
Thus $k_\rho$ and $s_\delta$ act as post-hoc certificates for the tangent approximation.
If they remain above threshold after filtering, multi-chart or particle methods are warranted regardless of filter type.

\paragraph{Limitations}
The constants are conservative (general coupling/concentration inequalities rather than manifold-specific transport).
Specialized transport constructions, such as geodesic couplings on Lie groups, may tighten the constants and reduce the theory--experiment gap in Section~\ref{sec:numerics}.
The bounds are also per-variable and single-step, whereas coupled robotics constraints may compound tangent-approximation errors across variables and long-horizon cascades may require joint diagnostics.
The analysis assumes $C^2$ manifolds with positive reach; piecewise-smooth contact manifolds, joint multi-variable diagnostics, and non-Gaussian priors remain future work.

\section{Conclusion}\label{sec:conclusion}
We derived explicit non-asymptotic $W_2$ stability bounds for tangent-linearized Gaussian marginalization and conditioning on smooth manifolds, decomposing error into local geometric distortion and tail leakage.
For Gaussian inputs, the bounds yield closed-form diagnostics from $(\mu,\Sigma)$ and local curvature/reach, confirmed by circle and planar-pushing benchmarks that locate the calibration transition at $k_\rho\approx 1/6$ with normal-direction uncertainty as the dominant failure mode.
A structural asymmetry emerges: conditioning degrades faster than marginalization because $Z_T^{-1}$ amplifies density-shape distortions invisible to $W_2$, while the mean-offset diagnostic $s_\delta$ provides a complementary escalation trigger for factor-graph solvers to adaptively switch between relinearization, multi-chart, and particle methods.
Future extensions include piecewise-smooth geometries, joint multi-variable diagnostics, and non-Gaussian priors.

\section*{Acknowledgment}
This work was supported by Deep-Tech TIPS funded by the Ministry of SMEs and Startups.

\balance
\bibliographystyle{IEEEtranN}
\bibliography{references.bib}

\end{document}